\newtheorem{theorem}{Theorem}[section]
\newtheorem{lemma}[theorem]{Lemma}
\newcommand{\bs}{\boldsymbol}
\begin{document}



\renewcommand*{\proofname}{Pf.}

\makeatletter
\renewenvironment{proof}[1][\proofname]{\par
  \normalfont \topsep0\p@\@plus0\p@\relax
  \trivlist
  \item[\hskip\labelsep
        \itshape
    #1\@addpunct{.}]\ignorespaces
}{%
  \endtrivlist\@endpefalse
}
\renewcommand\qedhere{} 
\makeatother



\title{\hspace*{-22.9pt}Extreme Dimension Reduction for Handling Covariate Shift\hspace*{-22.9pt}}

\author{Ben Trovato}

\author{G.K.M. Tobin}

\author{ 
{\bf Fulton Wang}  \\
MIT          \\
fultonw@mit.edu \\
\And
{\bf Cynthia Rudin}   \\
Duke University \\
cynthia@cs.duke.edu    \\
}



\maketitle

 \begin{abstract}
In the covariate shift learning scenario, the training and test covariate distributions differ, so that a predictor's average loss over the training and test distributions also differ.  In this work, we explore the potential of extreme dimension reduction, i.e. to very low dimensions, in improving the performance of importance weighting methods for handling covariate shift, which fail in high dimensions due to potentially high train/test covariate divergence and the inability to accurately estimate the requisite density ratios.  We first formulate and solve a problem optimizing over linear subspaces a combination of their predictive utility and train/test divergence within.  Applying it to simulated and real data, we show extreme dimension reduction helps sometimes but not always, due to a bias introduced by dimension reduction.
 \end{abstract}

\section{Introduction}
Often, the population for which one has labelled training data differs from the population one needs to make predictions for. 
In handling this discrepancy one often assumes the training and test domain have identical conditional outcome distributions ($P^{tr}_{Y|X} = P^{te}_{Y|X})$ but different marginal covariate distributions ($P^{tr}_{X} \neq P^{te}_X$).  The popular importance weighting (IW) approach \cite{shimodaira2000improving, gretton2009covariate,sugiyama2008direct, bickel2009discriminative} to solving this \emph{covariate shift} problem fails in high dimensions, in which the density ratios $P^{tr}_X(x) /P^{te}_X(x)$ used to reweight the training loss are unreliably estimated and concentrated on a small number of training samples \cite{cortes2010learning}, leading to high variance estimates of a predictor's expected test loss, the quantity to be minimized over predictors.  In this work, we propose a extreme linear dimension reduction preprocessing step, i.e. to very low dimensions ($\leq 4$ and often 1 in our experiments), to improve IW methods' performance on high dimensional data.

We propose this dimension reduction be \emph{extreme} to maximally reduce the IW estimator variance due to density ratio estimation error.  We propose this dimension reduction be \emph{linear} so that the downstream IW method, which we will assume to learn linear models, would still learn models linear in the original covariates, thus retaining interpretability.  We propose that \emph{some} dimension reduction should be done, so that we might uncover a subspace within which the train and test covariate distributions were not too different.  This would afford the downstream IW estimator a larger effective sample size and thus lower variance.

It is not clear a priori that extreme dimension reduction would be computationally and experimentally favorable.  Firstly, given the extreme low dimension of the subspaces we consider, in selecting one, we wish to estimate subspaces' true predictive utility: the minimum achievable test domain loss of a predictor acting on it.  However, estimating this utility for just a \emph{single} candidate subspace requires solving a convex optimization problem.  We must confront this potentically impractical computational problem; a cheaper but less accurate proxy for that utility is unlikely to suffice in extremely low dimensions.
Secondly, as we point out, dimension reduction introduces bias into the downstream IW estimator, so that extreme dimension reduction might add too much bias to actually help.

To test the utility of extreme dimension reduction, we first develop a procedure that maximizes, over projection matrices, estimates of the resulting subspace's aforementioned true predictive utility, regularized by a penalty against subspaces where the downstream IW estimator would have high variance.  This procedure, using techniques from bilevel optimization, is verified to be computationally feasible.  We then study on real data whether it can identify subspaces where the downstream IW method can reliably have low test loss.

We find that on some, but not all real data, extreme dimension reduction can facilitate the learning of accurate predictors for the test domain.  We hypothesize that for such success cases, our procedure returns sufficiently predictive subspaces within which the reduction in downstream IW estimator variance offsets the potential estimator bias.  To better understand this phenomena, we construct a simulated data example where extreme dimension reduction helps, and another one where the introduced bias is crippling.  Finally, reformulating the problem of learning a subgroup-specific model as a covariate shift problem, we demonstrate the benefit of extreme dimension reduction in learning a depression classifier for the college-aged population subgroup.

\section{Background}

\subsection{Covariate Shift Problem \label{sec:cov_shift}}
In the covariate shift problem, one is given $N^{tr}$ labelled training samples $\{x_i^{tr},y_i^{tr}\}$, with $x_i^{tr},y_i^{tr} \sim P^{tr}_{X,Y}$, the training distribution, and $N^{te}$ unlabelled test samples $\{x_i^{te}\}$, with $x_i^{te} \sim P^{te}_X$, the test distribution, with $x_i^{tr},x_i^{te} \in \mathbb{R}^D$.  Importantly, one makes the covariate shift assumption, that $P^{tr}_{Y|X}=P^{te}_{Y|X}$, but $P^{tr}_X\neq P^{te}_X$.  Given a model class $\mathcal{F}$, the covariate shift problem seeks $\operatorname{argmin}_{f\in\mathcal{F}} E_{P^{te}_{X,Y}}[l(f(X),Y)]$, where $l$ is a loss function that we assume to be convex in $f(X)$.

\subsection{Importance Weighted Loss Minimization}
To minimize expected test loss over predictors, noting that $E_{P^{te}_{X,Y}}[l(f(X),Y)]=E_{P^{tr}_{X,Y}}[\tfrac{P^{te}_X(X)}{P^{tr}_X(X)}l(f(X),Y)]$,  past work constructs an unbiased estimator of test loss by forming the empirical expectation version of the latter expectation, and then minimizes it over predictors, adding some regularization:
\begin{align}
&\operatorname{argmin}_{f\in\mathcal{F}}\sum_i \hat{w}_il(f(x_i^{tr},y_i^{tr}) + \Omega(f)\label{eq:naive}
\end{align}
where $\hat{w}_i$ estimates $w_i\coloneqq\tfrac{P^{te}_X(x_i^{tr})}{P^{tr}_X(x_i^{tr})}$.
We note that the ``effective sample size'' of the estimator is $N^{tr}/\sum_i \hat{w}_i^2$ \cite{doucet2001sequential}.

\subsection{Density Ratio Estimation\label{eq:ratio}}
To carry out importance weighting requires estimating the density ratios $\tfrac{P^{te}_X(x_i^{tr})}{P^{tr}_X(x_i^{tr})}$.  A variety of methods \cite{gretton2009covariate, bickel2009discriminative} exist for doing so, but one, least squares importance fitting (LSIF)\cite{kanamori2009least} and its computationally more efficient variant, unconstrained LSIF (uLSIF) stands out because it involves only a (possibly constrained) quadratic program, and admits a cross validation scheme.

LSIF assumes the estimated ratio function can be written as $\boldsymbol{\alpha}^T \boldsymbol{\phi}(x)$, where $\boldsymbol{\phi}(x)=\{\phi_m(x)\}$ is a set of $M$ basis functions they choose to be gaussian kernels centered at test points and $\boldsymbol{\alpha}\in \mathbb{R}^K$ is a parameter they aspire to identify by minimizing the expected squared error to the true ratios: $E_{P_X^{tr}}[(\boldsymbol{\alpha}^T \boldsymbol{\phi}(X)-\tfrac{P_X^{te}(X)}{P_X^{tr}(X)})^2] = E_{P^{tr}_X}[(\boldsymbol{\alpha}^T \boldsymbol{\phi}(X))^2] - 2E_{P^{te}_X}[\boldsymbol{\alpha}^T \boldsymbol{\phi}(X)]+C$, where $C$ is a constant.  Substituting empirical expectations, dividing by 2, and regularizing, they obtain the fit value of $\boldsymbol{\alpha}$ as 
\begin{align}
\hspace{-0pt}\boldsymbol{\alpha}^*=\operatorname{argmin}_{\boldsymbol{\alpha}} \tfrac{1}{2} \boldsymbol{\alpha}^T\bs{H} \boldsymbol{\alpha} - \boldsymbol{h}^T\boldsymbol{\alpha} + \gamma \boldsymbol{1}^T\boldsymbol{\alpha}\ \text{subject to}\ \boldsymbol{\alpha}\geq 0, \nonumber
\end{align}
with $\boldsymbol{H}=\tfrac{1}{N^{tr}}\sum_k \boldsymbol{\phi}(x_i^{tr}) \boldsymbol{\phi}(x_i^{tr})^T$, $\boldsymbol{h}=\tfrac{1}{N^{te}}\sum_k\boldsymbol{\phi}(x_i^{te})$, $\gamma$ is the regularization constant, and $\boldsymbol{1}$ is the vector of all 1's.  The estimated ratio function is then $\hat{w}(x)=\bs{\alpha}^{*T}\phi(x)$, with the element-wise positive constraint $\boldsymbol{\alpha}\geq 0$ ensuring $\hat{w}(x)$ is positive.  uLSIF simply removes the positivity constraint on $\bs{\alpha}$ in obtaining $\bs{\alpha}^*$.  With $\bs{\alpha}^T\bs{\phi}(x)$ no longer guranteed to be positive, under uLSIF, the learned ratio function is $\hat{w}(x)=\max(\bs{\alpha}^{*T}\bs{\phi}(x),0)$.

\section{Extreme Dimension Reduction for Importance Weighting}

\subsection{Motivation\label{sec:motivation}}
Our dimension reduction approach is motivated by the following: suppose we will apply a linear projection prior to an IW method for handling covariate shift. Furthermore, suppose (as we will in this work) that the model class of the IW method is the set of linear models.  That is, we first choose subspace dimension $K$, a projection matrix $A\in\mathcal{R}^{K\times D}$, and then minimize over $b\in\mathcal{R}^K$ (with regularization)
\begin{align}
\hspace*{-7pt}\hat{L}(b;A,\hat{w}^A)\coloneqq \tfrac{1}{N^{tr}}\sum_i \hat{w}^A_il\left(b^TA^Tx_i^{tr},y_i^{tr} \right), \label{eq:estimator}\ \text{where}\hspace*{-8pt}
\end{align} 
$\hat{w}^A_i$ estimates $w^A_i\coloneqq\tfrac{P_{A^TX}^{te}(A^Tx_i^{tr})}{P_{A^TX}^{tr}(A^Tx_i^{tr})}$ and $\hat{w}^A\coloneqq\{\hat{w}^A_i\}$.

This projection by $A$ can potentially reduce the variance of the downstream IW loss estimator if we choose $A$ to satisfy two criteria.  Firstly, if the dimension of the subspace $A$ projects to is greatly reduced, the density ratio estimates themselves will be of lower variance, being those in a lower dimensional space.  Secondly, we can choose $A$ so the effective sample size of the IW estimator following projection by $A$, $N^{tr}/\sum_i (\hat{w}^A_i)^2$, would be above some threshold.

Of course, projection by $A$ might hurt performance if other criteria are not satisfied.  We do not want projection by $A$ to introduce too much bias in the downstream IW estimator, so that we may not want $A$ to be too low-dimensional.
And of course we want $A$ to have high predictive utility; \begin{align}
U(A)\coloneqq\operatorname{argmin}_{b\in\mathbb{R}^K}E_{P^{te}}[l(b^TA^TX)] \label{eq:U_A}
\end{align}
should be low.

Constructively, we can obtain an $A$ satisfying a combination of those criteria by minimizing over projection matrices an estimator of $U(A)$ subject to a regularizer $\sum_i (\hat{w}^A_i)^2$ that discourages subspaces where the downstream IW estimator would have high variance.  We can estimate $U(A)$ with
\begin{align}
\hat{U}(A)&\coloneqq\hat{L}(b^*;A,\hat{w}^A),\ \text{where}\label{eq:hat_U_A}\\
b^*&=\operatorname{argmin}_{b\in\mathbb{R}^K}\hat{L}(b;A;\hat{w}^A) + c\lVert b\rVert^2,
\end{align} i.e. $b^*$ is a linear predictor with low out-of-sample test loss, obtained via IW loss minimization regularized by a tradeoff constant $c$.  To complete the problem, we still need to construct the projected density ratio estimates $\hat{w}^A_i$, which notably depend on $A$.  We can do so by constraining that the $\hat{w}^A_i$ are equal to the projected density ratio estimates that would be returned by a ratio estimation method, i.e. LSIF or uLSIF, run on the projected covariates $\{A^Tx_i^{tr}\}$, $\{A^Tx_i^{te}\}$.

\subsection{Formulation}

Here, we describe an optimization problem for finding a projection matrix $A\in \{\mathbb{R}^{K\times D}, A^TA=I\}$ with which to project the covariates prior to the application of an IW method for tackling the covariate shift problem, as described in Section \ref{sec:motivation}.  $K<D$ is the dimension of the desired subspace.  In particular, we assume given $A$, a \emph{linear} model will be fit using importance weighting, i.e. minimizing over $b\in\mathcal{R}^D$ the estimator of Equation \ref{eq:estimator} with regularization, and that uLSIF will be used for density ratio estimation.

We propose finding $A$ by solving:
\setlength{\jot}{0pt}
\begin{align}
&\min_{A}\sum_i \hat{L}(b^*;A,\hat{w}^A) + \lambda\sum_i (\hat{w}_i^{A})^{2} \label{eq:empirical_obj}\\
\shortintertext{subject to}
&b^*=\operatorname{argmin}_{b\in\mathbb{R}^K} \hat{L}(b;A,\hat{w}^A) + c\lVert b\rVert^2\label{eq:is_best}\\
&\hat{w}_i^{A} = \max(\bs{\alpha}^{*T}\bs{\phi}(u_i^{tr}),0)\label{eq:pos_ratio}\\
&\bs{\alpha^*}=\operatorname{argmin}_{\boldsymbol{\alpha}} \tfrac{1}{2} \boldsymbol{\alpha}^T\bs{H} \boldsymbol{\alpha^\prime} - \boldsymbol{h}^T\boldsymbol{\alpha} + \gamma \boldsymbol{1}^T\boldsymbol{\alpha}\label{eq:ratio_opt}\\
&\boldsymbol{H}=\tfrac{1}{N^{tr}}\sum_k \boldsymbol{\phi}(A^Tx_i^{tr}) \boldsymbol{\phi}(A^Tx_i^{tr})^T,\\  &\boldsymbol{h}=\tfrac{1}{N^{te}}\sum_k\boldsymbol{\phi}(A^Tx_i^{te})\label{eq:ratio_h}\\
&A^TA=I,\label{eq:stiefel}
\end{align}
where $\hat{L}(b^*;A,\hat{w}^A)$ is as defined in Equation \ref{eq:estimator} and $\lambda,\gamma,c$ are hyperparameters described shortly.

The first term of the objective is the predictive utility estimate for $A$ from Section \ref{sec:motivation}, $\hat{U}(A)$. It has such an interpretation firstly due to the constraint of Equation \ref{eq:is_best}, and secondly because by Equations \ref{eq:pos_ratio}-\ref{eq:ratio_h}, the $\hat{w}_i^{A}$ are the estimates of $\tfrac{P^{te}_{A^TX}(x_i^{te})}{P^{tr}_{A^TX}(x_i^{te})}$ returned by running uLSIF on $\{A^Tx_i^{tr}\},\{A^Tx_i^{te}\}$, the training and test covariates, projected by $A$.  Again, note that the $\hat{w}_i^{A}$ thus depend on $A$ through an optimization problem: that of running uLSIF.  We use uLSIF instead of LSIF for ratio estimation due to its computational benefits and relatively strong performance.  Due to the interpretation of the $\hat{w}_i^{A}$, the second term of the objective is a regularizer that encourages the effective sample size of that estimator, $N^{tr}/\sum_i (\hat{w}_i^A)^2$, to be high.  
Regarding hyperparameter selection, we update $c$ and $\gamma$ throughout individual runs of the gradient descent procedure we use with ``in-line'' cross-validation, and we choose $\lambda$, controlling the lower bound on effective sample size, using weighted cross-validation \cite{sugiyama2007covariate} (details in supplement).  


\subsection{Analysis of Dimension Reduction for Importance Weighting\label{sec:analysis}}
We first show that in expectation, following dimension reduction by any projection matrix $A\in \mathbb{R}^{K\times D}$, $N^{tr}/\sum_i(w^A_i)^2)$, the effective sample size of the downstream IW loss estimator of Equation \ref{eq:estimator}, where the density ratio estimates are replaced by the true density ratios, is larger than the effective sample size of the IW loss estimator had no dimension reduction been performed, $N^{tr}/\sum_i(w(x_i^{tr})^2)$:
\begin{lemma}\label{lemma:1}
For $A\in\mathbb{R}^{K\times D}$ s.t. $A^TA=I$, $E_{P_X^{tr}}[N^{tr}/\sum_i(w^A_i)^2)]\geq E_{P_X^{tr}}[N^{tr}/\sum_i(w(x_i^{tr})^2)]$.\hspace*{-39pt}
\end{lemma}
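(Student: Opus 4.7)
My plan is to reduce the lemma to a Rao--Blackwell style inequality for the second moments of the density ratios, then transport that ``mean-level'' comparison into the reciprocal form of the effective sample size. The key structural fact is that the projected ratio $w^A$ is exactly the $P_X^{tr}$-conditional expectation of the full ratio $w$ given the projection $A^TX$.

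First I would verify the tower identity $E_{P_X^{tr}}[w(X)\mid A^TX = u] = w^A(u)$: on the fiber $\{x : A^Tx = u\}$ the conditional training density factors as $P_X^{tr}(x)/P_{A^TX}^{tr}(u)$, so
\begin{equation*}
E_{P_X^{tr}}[w(X)\mid A^TX = u] \ =\ \frac{1}{P_{A^TX}^{tr}(u)}\int_{A^Tx=u} P_X^{te}(x)\, d\sigma(x) \ =\ \frac{P_{A^TX}^{te}(u)}{P_{A^TX}^{tr}(u)} \ =\ w^A(u),
\end{equation*}
where $d\sigma$ is the fiber measure from disintegrating Lebesgue measure. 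Applying conditional Jensen with the convex map $t\mapsto t^2$ then gives $E_{P_X^{tr}}[w(X)^2\mid A^TX] \geq (w^A(A^TX))^2$. Iterating the expectation and summing over the $N^{tr}$ iid training samples produces the clean mean-level bound
\begin{equation*}
E_{P_X^{tr}}\!\left[\sum_i w(x_i^{tr})^2\right] \ \geq\ E_{P_X^{tr}}\!\left[\sum_i (w_i^A)^2\right],
\end{equation*}
which is equivalent to the data-processing inequality for the chi-squared divergence between the training and test marginals under the orthogonal projection.

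The hard step, and the one I expect to be the main obstacle, is promoting this comparison of expected sums to the stated comparison of expected reciprocals. Writing $S = \sum_i w(x_i^{tr})^2$ and $S_A = \sum_i (w_i^A)^2$, a naive application of Jensen to the convex $s\mapsto 1/s$ only yields the lower bounds $E[1/S]\geq 1/E[S]$ and $E[1/S_A]\geq 1/E[S_A]$, which do not chain with $E[S]\geq E[S_A]$ in a direction that delivers the desired inequality; the tower version $E[S\mid \sigma(A^Tx^{tr}_{1:N})]\geq S_A$ runs into the same issue, since the monotonicity of $1/s$ flips the sign against the Jensen bound. Indeed, the literal reciprocal statement can fail at finite $N^{tr}$ under unbounded ratios: if $w^A\equiv 1$ while $w$ vanishes on a set of positive $P_X^{tr}$-measure, then $E[1/S]$ is infinite while $E[1/S_A]$ is finite. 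To close the argument I would therefore either (a) impose a bounded-ratio assumption and invoke a delta-method / concentration argument, using that $S/N^{tr}$ and $S_A/N^{tr}$ concentrate around $E[w^2]$ and $E[(w^A)^2]$ respectively so that $E[N^{tr}/S]$ and $E[N^{tr}/S_A]$ track $1/E[w^2]$ and $1/E[(w^A)^2]$; or (b) interpret the lemma in its first-order ``plug-in'' sense, commonplace in the importance-sampling literature, by replacing $E[N^{tr}/\sum_i(\cdot)^2]$ with $N^{tr}/E[\sum_i(\cdot)^2]$, in which case the claim collapses to the mean-level bound proved above.
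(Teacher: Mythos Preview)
Your mean-level argument is the paper's proof in different clothing. The paper writes $(U,V)=(A^TX,C^TX)$ with $C$ spanning the orthogonal complement of $A$, factors the full ratio as $\tfrac{P^{te}(U)}{P^{tr}(U)}\cdot\tfrac{P^{te}(V|U)}{P^{tr}(V|U)}$, and uses $E_{P^{tr}_{V|U}}\bigl[(P^{te}(V|U)/P^{tr}(V|U))^2\bigr]\geq 1$ (nonnegativity of the conditional Pearson divergence) to conclude $E_{P^{tr}}[w^2]\geq E_{P^{tr}}[(w^A)^2]$. Your tower identity $w^A=E_{P^{tr}}[w\mid A^TX]$ followed by conditional Jensen is exactly this inequality, since the conditional ratio has $P^{tr}$-mean one. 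The Pearson-divergence phrasing makes the slack explicit; the Rao--Blackwell phrasing makes the data-processing interpretation explicit; mathematically they coincide.

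As for the step you flag as the main obstacle: the paper does not take it either. Its displayed chain ends at $E_{P_X^{tr}}\bigl[\tfrac{1}{N^{tr}}\sum_i w(x_i^{tr})^2\bigr]\geq E_{P_X^{tr}}\bigl[\tfrac{1}{N^{tr}}\sum_i (w^A_i)^2\bigr]$ and never passes to expected reciprocals, in effect silently adopting your option (b). Your observation that the literal reciprocal inequality can fail at finite $N^{tr}$ when $w$ vanishes on a set of positive $P^{tr}$-measure is a genuine point the paper does not address.
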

\vspace{-0pt}
\useshortskip
\setlength{\jot}{0pt}
\begin{proof} Let $C$ be a matrix whose columns span the orthogonal complement to the subspace spanned by the columns of $A$, and let $(U,V)=(A^TX,C^TX)$.  Then
\begin{align}
\useshortskip
&E_{P_X^{tr}}[\tfrac{1}{N^{tr}}\sum_i(w(x_i^{tr})^2)] = E_{P_X^{tr}}[(\tfrac{P^{te}(X)}{P^{tr}(X)})^2]\\
&= E_{P_{U,V}^{tr}}[(\tfrac{P^{te}(U)P^{te}(V|U)}{P^{tr}(U)P^{tr}(V|U)})^2]\\
&= E_{P_{U}^{tr}}\left[(\tfrac{P^{te}(U)}{P^{tr}(U)})^2E_{P^{tr}_{V|U}}[(\tfrac{P^{te}(V|U)}{P^{tr}(V|U)})^2]\right]\\
&\geq E_{P_{U}^{tr}}[(\tfrac{P^{te}(U)}{P^{tr}(U)})] = E_{P_X^{tr}}[\tfrac{1}{N^{tr}}\sum_i (w^A_i)^2] \label{eq:just_U}
\end{align}
\end{proof}
\vspace{-0pt}
where we have used the fact that for any fixed $U$, $1 \leq E_{P^{tr}_{V|U}}[(\tfrac{P_{V|U}^{te}(V|U)}{P_{V|U}^{tr}(V|U)})^2]\coloneqq PE(P^{te}_{V|U}\lVert P^{te}_{V|U})+1$ to justify Equation \ref{eq:just_U}, which follows because $PE(\cdot\lVert\cdot)$, denoting the Pearson Divergence between distributions, is always at least 0.  This verifies our intuition that projecting onto a subspace can, under idealized circumstances, increase effective sample size.  Though, as the effective sample size following projection is only larger in expectation, the realized effective sample size might still be small for some projections, thus the need to explicitly regularize against such cases.

\newcommand{\train}{}
\newcommand{\test}{}

While dimension reduction reduces downstream estimator variance (in expectation), it introduces bias in the estimator.  Under the covariate shift assumption, the estimator of Equation \ref{eq:estimator}, with true density ratios replacing estimates thereof, does not unbiasedly estimate test loss.
\begin{lemma}\label{lemma:2}
\hspace*{-2.5pt}\mbox{$E_{\train{P^{tr}_{X,Y}}}[\hat{L}(b;A,w^A)]\neq E_{P_{X,Y}^{te}}[L\left(b^TA^TX,Y \right)]$}  for $b\in\mathbb{R}^D$, $A$ as in Lemma \ref{lemma:1}. \hspace*{-0pt}
\end{lemma}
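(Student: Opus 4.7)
The plan is to mirror the change-of-variables used in Lemma \ref{lemma:1} and then isolate exactly where the covariate shift assumption fails to carry over after the projection. Let $C$ be the matrix whose columns span the orthogonal complement to $\operatorname{col}(A^T)$, and set $U=A^TX$, $V=C^TX$, so the joint distribution on $(U,V,Y)$ is a lossless reparameterization of $(X,Y)$. Observe that $w^A$ evaluated at a training point depends only on $U=A^Tx^{tr}$, since $w^A_i = P^{te}_{A^TX}(A^Tx_i^{tr})/P^{tr}_{A^TX}(A^Tx_i^{tr})$, and the loss term $l(b^TA^Tx_i^{tr},y_i^{tr})$ also depends on $x^{tr}$ only through $U$.

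Next I would rewrite both sides in the $(U,Y)$ coordinates. For the left-hand side, taking the $P^{tr}_{X,Y}$ expectation of $\hat L(b;A,w^A)$ is, after marginalizing $V$, equal to $\int P^{te}_U(u)\,E_{P^{tr}_{Y\mid U=u}}[l(b^Tu,Y)]\,du$, using the identity $w^A(u)P^{tr}_U(u)=P^{te}_U(u)$. The right-hand side is, by definition, $\int P^{te}_U(u)\,E_{P^{te}_{Y\mid U=u}}[l(b^Tu,Y)]\,du$. So the two quantities differ by
\begin{align}
\int P^{te}_U(u)\,\bigl(E_{P^{te}_{Y\mid U=u}}[l(b^Tu,Y)]-E_{P^{tr}_{Y\mid U=u}}[l(b^Tu,Y)]\bigr)\,du.\nonumber
\end{align}

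Now I would explain why this difference is generically nonzero. The covariate shift assumption guarantees $P^{tr}_{Y\mid X}=P^{te}_{Y\mid X}$, but after marginalizing $V$ out one obtains $P^{\cdot}_{Y\mid U}(y\mid u)=\int P_{Y\mid X}(y\mid x(u,v))\,P^{\cdot}_{V\mid U}(v\mid u)\,dv$, so $P^{tr}_{Y\mid U}$ and $P^{te}_{Y\mid U}$ coincide only in the special case $P^{tr}_{V\mid U}=P^{te}_{V\mid U}$. Since a generic covariate shift allows $P^{tr}_{V\mid U}\neq P^{te}_{V\mid U}$, the integrand above is generically nonzero.

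To make the inequality rigorous rather than merely ``generically true,'' I would close with a minimal counterexample: take $X=(U,V)\in\mathbb{R}^2$ with $A=(1,0)^T$, let $Y$ depend nontrivially on $V$ (say $Y=V$), and choose $P^{tr}$ and $P^{te}$ so $P^{tr}_{V\mid U}\neq P^{te}_{V\mid U}$ for a positive-measure set of $u$ (for instance, a bivariate Gaussian with different covariances and matching marginals in $U$). For any non-degenerate loss $l$, a direct computation shows the two sides differ. The main obstacle is purely expository: the change of variables is routine, but stating precisely that covariate shift is preserved in $X$ but not in $U=A^TX$, and backing this by a concrete example rather than leaving it as a ``generic'' statement, is what gives the lemma its force.
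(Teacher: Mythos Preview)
Your proposal is correct and follows essentially the same route as the paper: introduce $(U,V)=(A^TX,C^TX)$, rewrite the reweighted training expectation so that the $P^{tr}_U$ factor cancels against $w^A$, and arrive at an expectation under the mixed law $P^{te}_U\,P^{tr}_{V\mid U}\,P_{Y\mid U,V}$ (equivalently $P^{te}_U\,P^{tr}_{Y\mid U}$), which differs from the test expectation exactly when $P^{tr}_{V\mid U}\neq P^{te}_{V\mid U}$. Your explicit counterexample is a welcome addition---the paper simply asserts the final $\neq$ without supplying one.
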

\begin{proof}Let $U,V$ be as in Lemma \ref{lemma:1}, $(x_i^{tr},y_i^{tr})\sim P^{tr}_{X,Y}$.  Then,
\begin{align*}
&E_{\train{P^{tr}_{X,Y}}}[\hat{L}(b;A,w^A)]\coloneqq E_{\train{P^{tr}_{U,Y}}}[\tfrac{1}{N^{tr}}\sum_i w^A_iL\left(b^TA^Tx_i^{tr},y_i^{tr} \right)]\\
&= E_{\train{P^{tr}_{U,Y}}}[\tfrac{\test{P^{te}_{U}}(U)}{\train{P^{tr}_{U}}(U)}L(f(U),Y)]\\
&=E_{\train{P^{tr}_{U,V}}}[\tfrac{\test{P^{te}_{U}}(U)}{\train{P^{tr}_{U}}(U)}E_{P_{Y|U,V}}[L(f(U),Y)]]\\
&=\int \train{P^{tr}_U}(U) \train{P^{tr}_{V|U}}(U)
   \tfrac{\test{P^{te}_{U}}(U)}{\train{P^{tr}_{U}}(U)}E_{P_{Y|U,V}}
   [L(f(U),Y)]] dU dV \\
&=\int \test{P^{te}_U}(U) \train{P^{tr}_{V|U}}(U)E_{P_{Y|U,V}}
   [L(f(U),Y)]] dU dV \\
&=E_{\test{P^{te}_U}\train{P^{tr}_{V|U}}P_{Y|U,V}}[L(f(U),Y)] \neq E_{P_{X,Y}^{te}}[L\left(b^TA^TX,Y \right)].
\end{align*}
\end{proof}

To understand the source of this bias, we can upper bound the bias with an interpretable quantity (proof in supplement):
\begin{lemma}
Given $A,b$, let $U,V$ be as defined in Lemma \ref{lemma:1}, and $(x_i^{tr},y_i^{tr})\sim P^{tr}_{X,Y}$.  Then
\begin{align}\hspace{-0pt}&\Big|E_{P^{tr}_{X,Y}}[\hat{L}(b;A,w^A)]
- E_{P^{te}_{X,Y}}[l(b^TA^TX,Y)]\Big| \nonumber\\
&\leq E_{P_U^{tr}}[PE(P^{te}_{V|U}\lVert P^{tr}_{V|U})]^{\tfrac{1}{2}} E_{P^{te}_U}[Var_{P^{tr}_{Y|U}}(l(b^TU,Y))]^{\tfrac{1}{2}}.\nonumber
\end{align} \label{lemma:3}
\end{lemma}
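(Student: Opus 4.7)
The plan is to pick up from the identification
$$E_{P^{tr}_{X,Y}}[\hat{L}(b;A,w^A)] = E_{P^{te}_U P^{tr}_{V|U} P_{Y|U,V}}[l(b^TU,Y)]$$
obtained in the proof of Lemma \ref{lemma:2}; replacing $P^{tr}_{V|U}$ by $P^{te}_{V|U}$ in the same derivation yields $E_{P^{te}_{X,Y}}[l(b^TA^TX,Y)]$, so the bias equals $E_{P^{te}_U}[\Delta(U)]$ where
$$\Delta(u) \coloneqq \int \bigl(P^{tr}_{V|U}(v|u) - P^{te}_{V|U}(v|u)\bigr)\, g(u,v)\, dv,\quad g(u,v) \coloneqq E_{P_{Y|U,V}}[l(b^Tu,Y)\mid U=u, V=v].$$
I would first bound $|\Delta(u)|$ pointwise in $u$ and then integrate against $P^{te}_U$ with a second Cauchy--Schwarz.

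For the pointwise bound, the key observation is that $P^{tr}_{V|U}-P^{te}_{V|U}$ integrates to zero, so $g(u,\cdot)$ may be recentered by any $u$-dependent constant without changing $\Delta(u)$. Rewriting the signed difference as $P^{tr}_{V|U}(v|u)\bigl(1-P^{te}_{V|U}(v|u)/P^{tr}_{V|U}(v|u)\bigr)$, Cauchy--Schwarz against the probability measure $P^{tr}_{V|U}(\cdot|u)$ produces
$$|\Delta(u)|\le \sqrt{PE\bigl(P^{te}_{V|U}\lVert P^{tr}_{V|U}\bigr)}\cdot \sqrt{Var_{P^{tr}_{V|U}}\bigl(g(u,V)\bigr)},$$
since expanding $E_{P^{tr}_{V|U}}[(1-P^{te}_{V|U}/P^{tr}_{V|U})^2]$ and using $E_{P^{tr}_{V|U}}[P^{te}_{V|U}/P^{tr}_{V|U}]=1$ recovers exactly the Pearson divergence. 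The variance factor is in turn dominated by $Var_{P^{tr}_{Y|U}}(l(b^TU,Y)\mid U=u)$ via the law of total variance, because $g(u,V)$ is the conditional expectation of $l(b^Tu,Y)$ given $(U,V)$.

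To conclude I would apply $|E_{P^{te}_U}[\Delta(U)]|\le E_{P^{te}_U}[|\Delta(U)|]$ and invoke Cauchy--Schwarz once more at the outer level to factor the expectation into the advertised product of square roots. If the stated form of the bound requires the Pearson divergence to be averaged against $P^{tr}_U$ rather than $P^{te}_U$, one first changes measure via the ratio $w^A(U)=P^{te}_U(U)/P^{tr}_U(U)$ and then splits factors of $w^A$ across the two Cauchy--Schwarz terms to recover the desired product form.

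The main obstacle is selecting the recentering and the Cauchy--Schwarz split so the two emergent factors line up cleanly as a Pearson divergence on the one hand and a conditional variance of the loss on the other; once the recentering choice $c(u)=E_{P^{tr}_{V|U}}[g(u,V)]$ is in place, the remaining steps---expanding the quadratic under $P^{tr}_{V|U}$, identifying the Pearson divergence, invoking the law of total variance, and applying the outer Cauchy--Schwarz---are routine algebra.
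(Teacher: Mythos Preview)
Your approach is correct and matches the paper's: start from the Lemma~\ref{lemma:2} identity, recenter $g(u,\cdot)$ by a $U$-measurable constant using that $P^{tr}_{V|U}-P^{te}_{V|U}$ integrates to zero, apply Cauchy--Schwarz to split off the Pearson-divergence factor, and then dominate $Var_{P^{tr}_{V|U}}(g(u,V))$ by $Var_{P^{tr}_{Y|U}}(l(b^Tu,Y))$ via the law of total variance. To obtain the asymmetric averaging in the stated bound ($P^{tr}_U$ on the Pearson term, $P^{te}_U$ on the variance term), apply Cauchy--Schwarz once globally under $P^{tr}_{U,V}$ with the ratio $P^{te}_U/P^{tr}_U$ grouped with the loss factor rather than the pointwise-then-outer scheme you first describe---your closing paragraph already anticipates exactly this measure change.
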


The LHS is the absolute bias of our estimator.  The first term of the bound measures, roughly speaking, how much the training and test distributions differ in the subspace orthogonal to the subspace $A$ parameterizes.  The second measures how much variance there is in $l(b^TU,Y)$, with $U$ fixed, averaged over the test distribution of $U$, i.e. it represents how well the loss can be predicted, given only $U$.  The bias upper bound is high if both terms are simultaneously high.

This analysis studies the bias and variance of the downstream IW estimator $\hat{L}(b;A,\hat{w}^A)$, under the assumption that $A$ is fixed.  But of course $A$ is not fixed, and furthermore, its selection uses that same IW estimator when obtaining a subspace's estimated predictive utility $\hat{U}(A)$ of Equation \ref{eq:hat_U_A}.  Therefore, the bias and variance in $\hat{L}(b;A,w^A)$ also affects the selection of $A$.  In particular,  the total variance of our procedure contains the variability in selecting $A$.  This is why in experiments, the loss variance of our two-step procedure can be sometimes higher than IW, even if conditional on $A$ being chosen, we would expect the downstream IW method to have lower variance.  Also, of keen interest would be a \emph{uniform} bound on bias, over \emph{all} $A$ and $b$.


\subsection{Solving the Optimization Problem}
The objective of the optimization problem, which we will refer to as $G$, depends only on projection matrix $A$, through several intermediate variables.  Thus, 
we solve the optimization problem via gradient descent over projection matrices using the Pymanopt \cite{JMLR:v17:16-177} package.  The challenge is in calculating the gradient $\tfrac{dG}{dA}$, as intermediate variables $\bs{\alpha^*}$ and $\bs{b^*}$ depend on other variables not analytically, but as the solution to convex optimization problems parameterized by the dependent variables.  We call them $\operatorname{argmin}$ variables. Fortunately we can apply existing work \cite{foo2008efficient,maclaurin2015gradient} to efficiently calculate $\tfrac{dG}{dA}$. We used Autograd\cite{maclaurin2015autograd} to calculate gradients not involving $\operatorname{argmin}$ variables.  As the problem is not convex in $A$, we utilize multiple random restarts.

\textbf{Differentiation with $\operatorname{argmin}$ variables}:
In reverse mode differentiation, the generic task is to recursively, given an objective of the form $G(v(u))$ and $\tfrac{dG}{dv}$, to compute $\tfrac{dv}{du}\tfrac{dG}{dv}$.  In our problem, we use reverse mode differentiation to compute $\tfrac{db^*}{dw}\tfrac{dG}{db^*}$ given $\tfrac{dG}{db^*}$, where $w$ is the length $N^{tr}$ vector of weights $\{\hat{w}_i^A\}$ of Equation \ref{eq:pos_ratio}, and $\tfrac{d\alpha^*}{dA}\tfrac{dG}{d\alpha^*}$ given $\tfrac{dG}{d\alpha^*}$.  For concreteness, we will illustrate the latter calculation; the same technique suffices for both, as both $\operatorname{argmin}$ variables are the solutions to unconstrained problems.  Please see \cite{amos2017optnet} for how to calculate $\tfrac{d\alpha^*}{dA}\tfrac{dG}{d\alpha^*}$ if LSIF is used, where $\alpha^*$ is given by a constrained optimization problem. 

\textbf{Calculating $\tfrac{db^*}{dw}$}:
We first describe how to explicitly form $\tfrac{db^*}{dw}$, as the naive way to calculate $\tfrac{db^*}{dw}\tfrac{dG}{db^*}$ is simply to form $\tfrac{db^*}{dw}$ and then matrix multiply.  Since $b^*$ minimizes the convex function $f(b)\coloneqq \hat{L}(b^*;A,\hat{w}^A)+ c \lVert b \rVert^2$, $b^*$ satisfies the stationarity condition
$\tfrac{df}{db}(b^*(w),w) = 0,$
where $0$ is a length D vector of zeros, and the notation suggests $f$ depends on both $b^*$ and $w$, and $b^*$ further depends on $w$.  Differentiating with respect to $w$ gives
$\tfrac{db^*}{dw}(\tfrac{d}{db}\tfrac{df}{db}) + \tfrac{d}{dw}\tfrac{df}{db} = 0,$
where $\tfrac{db^*}{dw}$ is the desired $N\times K$ Jacobian matrix, $\tfrac{d}{db}\tfrac{df}{db}$ is the $K\times K$ Hessian matrix of $f$, and $\tfrac{d}{dw}\tfrac{df}{db}$ is $N \times K$.  Rearranging, we obtain a multiple linear system we can solve for $\tfrac{db^*}{dw}$:
$\tfrac{d}{db}(\tfrac{df}{db}) \tfrac{db^*}{dw}^T = -(\tfrac{d}{dw}\tfrac{df}{db})^T.$
However solving this multiple linear system is in general not feasible, as calculating $\tfrac{db^*}{dw}$ involves solving $N$ separate linear systems, with $w$ being $N^{tr}$-dimensional. 

\textbf{Efficient Calculation of $\tfrac{db^*}{dw}\tfrac{dG}{db^*}$}:
Fortunately, two computational tricks can be applied.  Firstly, we can solve a single linear system instead of $N$ of them.  We can express the desired gradient as\vspace{-0pt}
$\tfrac{db^*}{dw}\tfrac{dG}{db^*} = -\tfrac{d}{dw}\tfrac{df}{db} (\tfrac{d}{db}\tfrac{df}{db})^{-1}\tfrac{dG}{db^*}.$
As $\tfrac{dG}{db^*}$ is assumed available, we can first solve
$(\tfrac{d}{db}\tfrac{df}{db}) v = \tfrac{dG}{db^*} \label{eq:v}$
for $v$, and then left multiply by $-\tfrac{d}{dw}\tfrac{df}{db}$ to get $\tfrac{db^*}{dw}\tfrac{dG}{db^*}$.

Moreover, we can avoid explicitly constructing the Hessian matrix in this linear system.  
Matrix free linear solvers such as conjugate gradient, when solving a system $Cx=d$, do not explicitly require $C$, but only that the matrix vector product $Cu$ be able to be calculated for any vector $u$.  Furthermore, Hessian-vector products can be calculated efficiently.  Returning to the example, note that for any $u$,
$(\tfrac{d}{db}\tfrac{df}{db}) u = \tfrac{d}{db}(u^T\tfrac{df}{db}).$
This means the requisite Hessian vector can be computed by first analytically forming the scalar valued function $u^T\tfrac{df}{db}$ and then calculating its gradient with respect to $b$ analytically or via finite differences.

\section{Simulation Study\label{sec:sim}}
We now apply extreme dimension reduction to two synthetic examples in order to give insight into when the bias in the IW estimator (c.f. Lemma \ref{lemma:2}) is detrimental.  In the first, this bias is nonexistent.  In the second, this bias is crippling.

\subsection{Example with no estimator bias}
We generated 12-dimensional covariates and real valued labels.  The labels only depend on the first two covariates, $X_1$ and $X_2$.  In particular, we let $Y|X \sim N(0.2\lVert X_1 \rVert +  \lVert X_2 \rVert, 0.01)$.
Figure \ref{fig:sim_data_dim1} shows samples from $P^{tr}_{Y,X_1}$ and $P^{te}_{Y,X_1}$, and Figure \ref{fig:sim_data_dim2} shows samples from $P^{tr}_{Y,X_2}$ and $P^{te}_{Y,X_2}$.  Training samples are in red, test samples are in test.  Thus for $X_1$ and $X_2$, the test distribution contains only positive values.  The distributions of each covariate are independent of each other, generated as follows:
\begin{align*}
&X_1^{tr},X_2^{tr}\sim \operatorname{Uniform}(-1,1)\\
&X_1^{te},X_2^{te}\sim \operatorname{Uniform}(0,1)\\
&X_3^{tr},\dots,X_{12}^{tr}\sim 0.9 \operatorname{Uniform}(-1,0) + 0.1 \operatorname{Uniform}(0,1)\\ &X_3^{te},\dots,X_{12}^{te}\sim 0.1 \operatorname{Uniform}(-1,0) + 0.9 \operatorname{Uniform}(0,1)
\end{align*}

\begin{table}[]
\centering
\begin{tabular}{c|c|c|c|c}
& \multicolumn{3}{c}{Data size} \\
\hline
Method & 50 & 100 & 150 & 200\\
\hline \hline
JP(1) & .16(.08) & .13(.04)  &.13(.03) & .14(.03)\\
UW & .26(.04) & .26(.03) &.26(.02) &.25(.02)\\
IW & .26(.04) & .26(.02) &.25(.02) & .25(.02)\\
SIR$(1)$ & .26(.03) & .26(.03) &.26(.03) & .26(.02)\\
RP$(1)$ & .26(.04) & .26(.03) &.25(.02) & .25(.02)\\
\end{tabular}
\caption{Example 1 - test loss over 50 replicates.
\label{fig:sim_data_loss}}
\vspace{-0pt}
\end{table}

\vspace{-0pt}Note that as our model class contains only linear models, when trained and tested on $P^{tr}$, a model that only uses $X_1$ will have comparable performance compared to a model that only uses $X_2$.  This is because even though $X_2$ appears more informative, the model class does not contain the ``v-shaped'' predictor that the mean of $P_{Y|X_2}$ follows.  Contrarily, when trained and tested on $P^{te}$, a model that uses only $X_2$ will have far superior performance to one that uses only $X_1$, because due to the covariate shift, the test covariates only have support under one of the two ``arms'' of the ``v''.  

\begin{figure}[]
\centering
\begin{subfigure}{0.48\linewidth}
\includegraphics[width=1\linewidth]{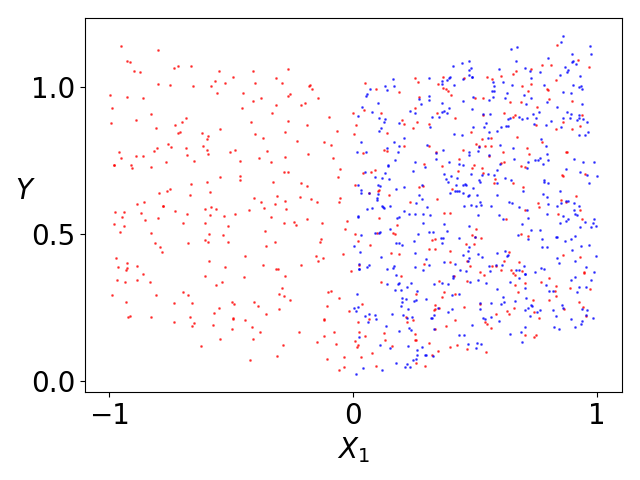}
\caption{$X_1$ is moderately predictive in both the test and training distribution.\\}
\label{fig:sim_data_dim1}
\end{subfigure}
\begin{subfigure}{0.48\linewidth}
\includegraphics[width=1\linewidth]{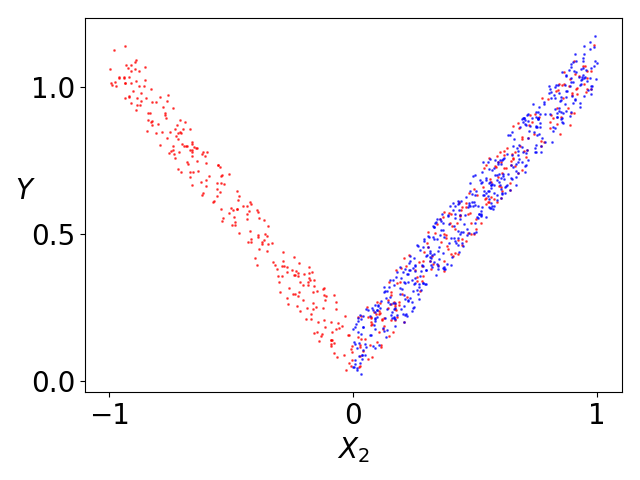}
\caption{$X_2$ is very predictive in the test distribution, but not very predictive in the training distribution.}
\label{fig:sim_data_dim2}
\end{subfigure}
\caption{Example 1 - the relative importance between the only 2 predictive features is reversed between training \& test distributions.  Training points in red, test in blue.}
\label{fig:sim_data}
\end{figure}

We compare our method (dimension reduction, then applying uLSIF), denoted JP$(K)$, K the subspace dimension, to:
\begin{itemize}[topsep=0pt]
\item Unweighted (UW): performing no covariate shift correction, minimizing (regularized) unweighted loss over the training data
\item Naive Importance Weighting (IW): Not applying any dimension reduction, and then applying uLSIF.
\item Random Projection to $K$ dimensions (RP$(K)$): Applying a random projection to generate $K$-dimensional covariates,  then applying uLSIF to the projected data.
\item Sliced Inverse Regression (SIR$(K)$): Applying sliced inverse regression \cite{li1991sliced} to project to $K$-dimensions, and then applying uLSIF.
\end{itemize}

Table \ref{fig:sim_data_loss} shows the mean and standard deviation of out-of-sample test loss (measured by absolute prediction error) over 50 replicates for the methods, as $N$, the number of generated training and test samples changes.  To elaborate, we generate $N$ samples each from $P^{tr}_{X,Y}$ and $P_X^{te}$, set aside $\tfrac{1}{3}$ of the test data, fit a model using the $N$ labelled training samples and remaining $\tfrac{2N}{3}$ unlabelled test samples, and evaluate the predictions on the set aside test data.  

We note the performance of IW is often worse than that of UW, due to the difficulty in estimating the weights and small sample size.  Furthermore, the sliced inverse regression methods, which do not account for covariate shift in estimating the projection, do not find any subspace particularly informative and thus do about the same as the random projection methods.  On the other hand, our method is able to find that $X_2$ is most useful in the test domain, so that the downstream IW method is applied to the correct 1-dimensional subspace.  We note that the variance in the loss for our method is slightly increased, due to the variance in the selection of the subspace within which to apply IW.

Our dimension reduction method projects the covariates to retain only $X_2$, because it finds $\hat{U}(X_2)\gg\hat{U}(X_1)$ (by abuse of notation we will use $X_2$ also to refer to the 1-dimensional projection matrix that retains only $X_2$).  Why is it able to accurately estimate $\hat{U}(X_1)$ and $\hat{U}(X_2)$ despite the fact that their estimation uses estimators of a predictor $b$'s test loss, $\hat{L}(b;X_1,\hat{w}^{X_1})$ and $\hat{L}(b;X_2,\hat{w}^{X_2})$, that are potentially biased, by Lemma \ref{lemma:2}?  The reason is that we have designed the data distributions so that $\hat{L}(b;X_1,w^{X_1})$ and $\hat{L}(b;X_2,w^{X_2})$, the same estimators but with true density ratios replacing their estimates, are actually unbiased.  We show this in the case of $X_1$. Since $\hat{L}(b;X_1,w^{X_1})$ is a function of just $X_1,X_2,Y$, in calculating its expectation with respect to any distribution, we can marginalize over variables other than $X_1,X_2,Y$.  Specifically, in calculating $E_{P^{tr}}[\hat{L}(b;X_1,w^{X_1})]$, we can invoke Lemma \ref{lemma:2} with $U=X_1,V=X_2$, to obtain \begin{align}
E_{P^{tr}}[\hat{L}(b;X_1,w^{X_1})] = E_{P^{1\setminus 2}}[\hat{L}(b;X_1,w^{X_1})],\label{eq:1_2_1}\hspace{-5pt}\\
\text{where}\ P^{1\setminus 2}_{X_1,X_2,Y}\coloneqq\test{P^{te}_{X_1}}\train{P^{tr}_{X_2|X_1}}P_{Y|X_1,X_2}.\label{eq:1_2}
\end{align}
However, actually $\train{P^{tr}_{X_2|X_1}}P_{Y|X_1,X_2}=P^{te}_{Y,X_2|X_1}$ (due to the symmetry in $\lVert X_2 \rVert$).  This implies $P^{1\setminus 2}_{X_1,X_2,Y} = P^{te}_{X_1,X_2,Y}$ by Equation \ref{eq:1_2}.  Specifically, expectations with respect to these 2 distributions are equal, so that by Equation \ref{eq:1_2_1}, $E_{P^{tr}}[\hat{L}(b;X_1,w^{X_1})]=E_{P^{te}}[\hat{L}(b;X_1,w^{X_1})]$, i.e. $\hat{L}(b;X_1,w^{X_1})$ unbiasedly estimates the test loss of $b$.

\subsection{Example with crippling estimator bias}
Now, we slightly modify the data distribution from the previous example, and obtain an example where the previously unbiased estimators suffer from crippling bias.  In particular, we make a single change, letting
\begin{align}
X_2^{tr}|X_1^{tr} = \begin{cases}X_1^{tr}\ \text{wpr}\ 0.5\\-X_1^{tr}\ \text{otherwise}.\end{cases}
\end{align}

\begin{figure}[]
\centering
\begin{subfigure}{0.48\linewidth}
\includegraphics[width=1\linewidth]{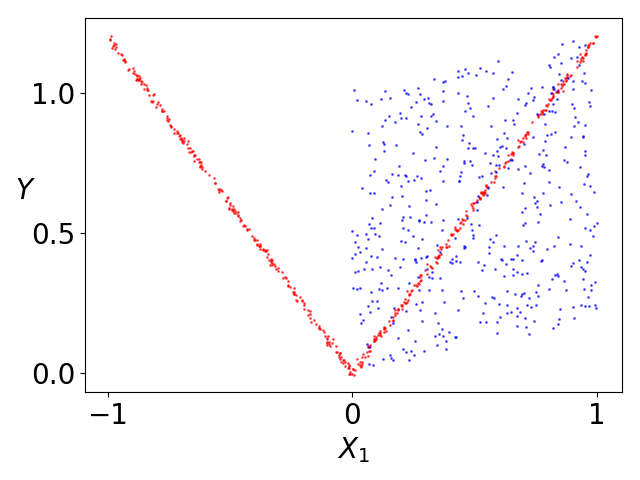}
\caption{$P^{tr}_{X_1,Y}$ and $P^{te}_{X_1,Y}$}
\label{fig:sim_data_dim1_2}
\end{subfigure}
\begin{subfigure}{0.48\linewidth}
\includegraphics[width=1\linewidth]{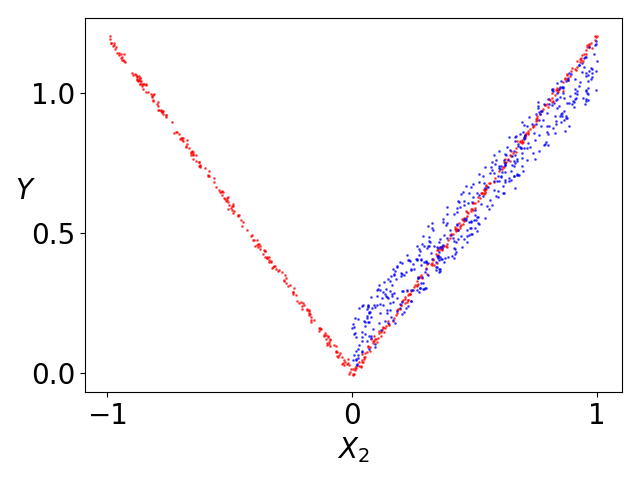}
\caption{$P^{tr}_{X_2,Y}$ and $P^{te}_{X_2,Y}$}
\label{fig:sim_data_dim2_2}
\end{subfigure}
\caption{Example 2 - $P^{te}$ is unchanged from before, so that $X_2$ is still most predictive in the test distribution.  Training points in red, test in blue.}
\label{fig:sim_data_2}
\end{figure}
For this second example, Figure \ref{fig:sim_data_dim1_2} shows samples from $P^{tr}_{Y,X_1}$ and $P^{te}_{Y,X_1}$, and Figure \ref{fig:sim_data_dim2_2} shows samples from $P^{tr}_{Y,X_2}$ and $P^{te}_{Y,X_2}$.  $P^{te}$ is unchanged, and $X_1$ and $X_2$ remain not predictive in $P^{tr}$.  However, the important difference in this example is that in $P^{1\setminus 2}_{X_1,X_2,Y}$ and $P^{2\setminus 1}_{X_1,X_2,Y}$, where by the same reasoning of Equations \ref{eq:1_2_1} and \ref{eq:1_2},
\begin{align}
E_{P^{tr}}[\hat{L}(b;X_2,w^{X_2})] = E_{P^{2\setminus 1}}[\hat{L}(b;X_2,w^{X_2})],\label{eq:2_2_1}\hspace{-5pt}\\
\text{where}\ P^{2\setminus 1}_{X_1,X_2,Y}\coloneqq\test{P^{te}_{X_2}}\train{P^{tr}_{X_1|X_2}}P_{Y|X_1,X_2}.\label{eq:2_2}
\end{align}

\begin{figure}[]
\centering
\begin{subfigure}{0.48\linewidth}
\includegraphics[width=1\linewidth]{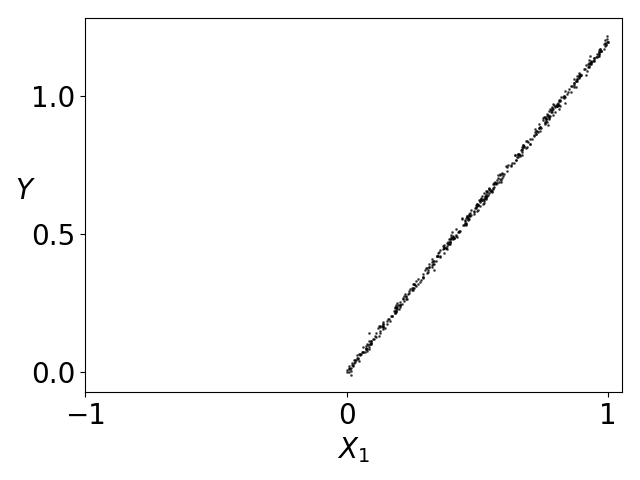}
\caption{$P^{1\setminus 2}_{X_1,Y}$}
\label{fig:sim_data_dim1_whack}
\end{subfigure}
\begin{subfigure}{0.48\linewidth}
\includegraphics[width=1\linewidth]{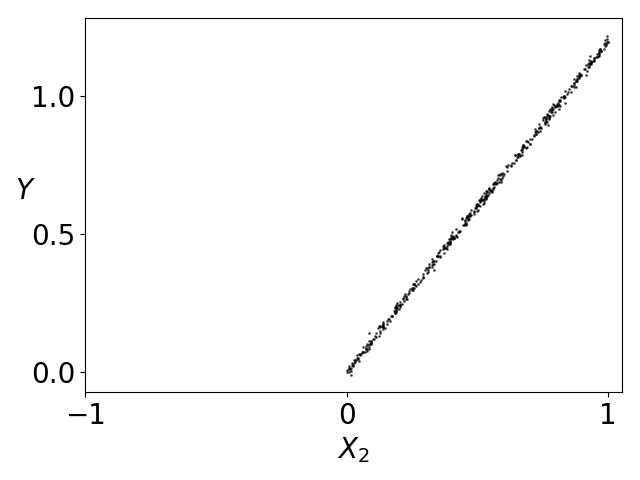}
\caption{$P^{2\setminus 1}_{X_2,Y}$}
\label{fig:sim_data_dim2_whack}
\end{subfigure}
\caption{Example 2 - Our method erroneously finds $X_1$ and $X_2$ to be equally predictive in the test distribution.}
\label{fig:sim_data_whack}
\end{figure}

Therefore, to understand the bias in $\hat{L}(b;X_1,w^{X_1})$ and $\hat{L}(b;X_2,w^{X_2})$ for any $b$, we only need to examine $P _{X_1,Y}^{1\setminus 2}$ and $P _{X_2,Y}^{2\setminus 1}$ (note the estimators are functions of $X_1,Y$ and $X_2,Y$, respectively).  Figure \ref{fig:sim_data_dim1_whack} shows samples from $P _{X_1,Y}^{1\setminus 2}$, and Figure \ref{fig:sim_data_dim2_whack} shows samples from $P_{X_2,Y}^{2\setminus 1}$.  What we see is that those two distributions are \emph{identical}.  This means that our method will find $\hat{U}(X_1)\approx \hat{U}(X_2)$, so that the 1-dimensional projection  will contain equal contributions from $X_1$ and $X_2$.  As $P^{te}$ is unchanged between the two examples, it remains the fact that the ideal projection to apply before learning a model for $P^{te}$ would retain only $X_2$.  Any other projection would be suboptimal.  This is reflected in Table \ref{fig:sim_data_loss_2}, where the performance of our method is degraded, so that they approach that of the other baseline methods in the first example.  We can verify the reason for this degradation by examining the component of the 1-dimensional projection vector corresponding to $X_1$ and $X_2$, for both examples.  We do so when $N^{tr}=N^{te}=200$, i.e. fairly large.  The average value of those 2 components for the 1st example are $0.01,0.99$, respectively.  The corresponding values for the 2nd example are $0.49,0.53$, respectively, reflecting that in the 2nd example, our method finds $X_1$ and $X_2$ equally predictive.

\begin{table}[]
\centering
\begin{tabular}{c|c|c|c|c}
& \multicolumn{3}{c}{Data size} \\
\hline
Dataset & 50 & 100 & 150 & 200\\
\hline \hline
Example 1 & .16(.08) & .13(.04)  &.13(.03) & .14(.03)\\
Example 2 & .26(.10) & .20(.11) &.23(.11) &.22(.13)\\
\end{tabular}
\caption{Comparison of our method's performance on the 2 simulated examples.  
\label{fig:sim_data_loss_2}}
\vspace{-0pt}
\end{table}

\vspace*{-5pt}

\begin{figure*}
\begin{subfigure}[b]{.16\linewidth}
\includegraphics[width=\linewidth]{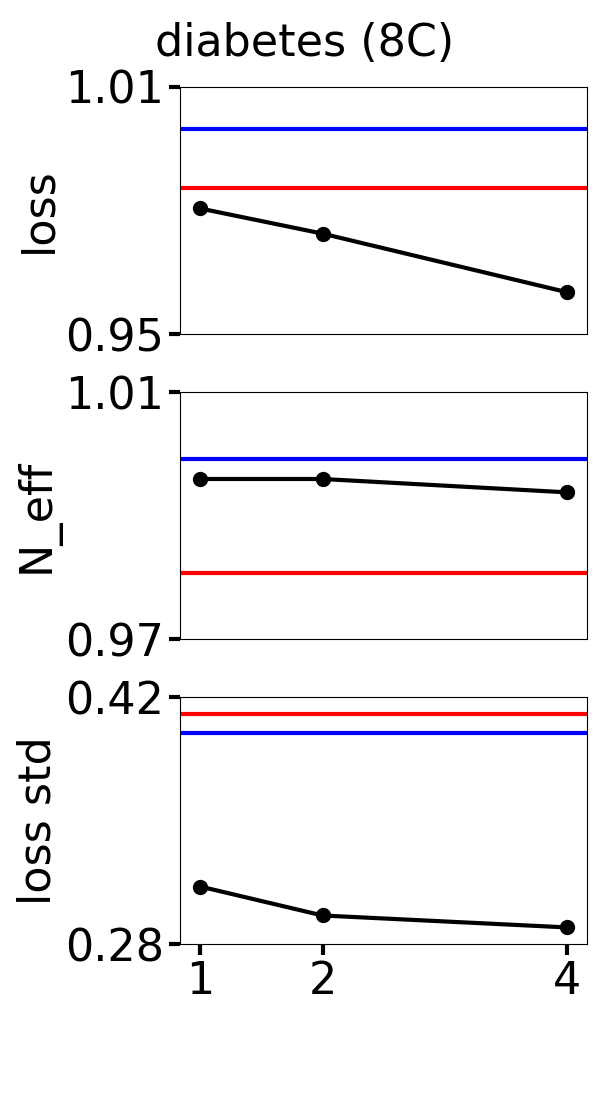}
\end{subfigure}
\begin{subfigure}[b]{.16\linewidth}
\includegraphics[width=\linewidth]{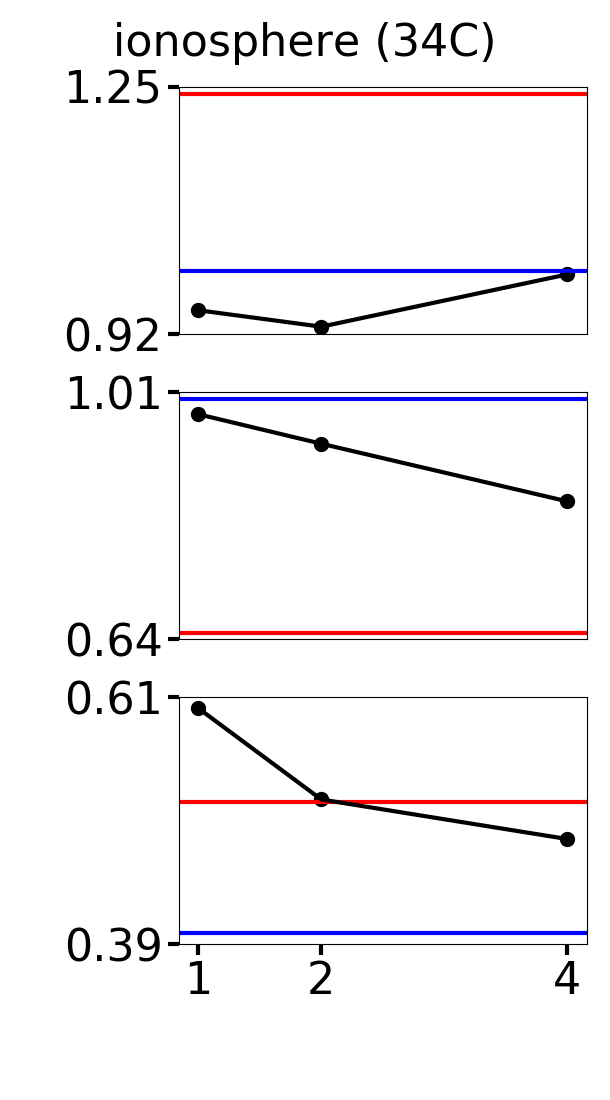}
\end{subfigure}
\begin{subfigure}[b]{.16\linewidth}
\includegraphics[width=\linewidth]{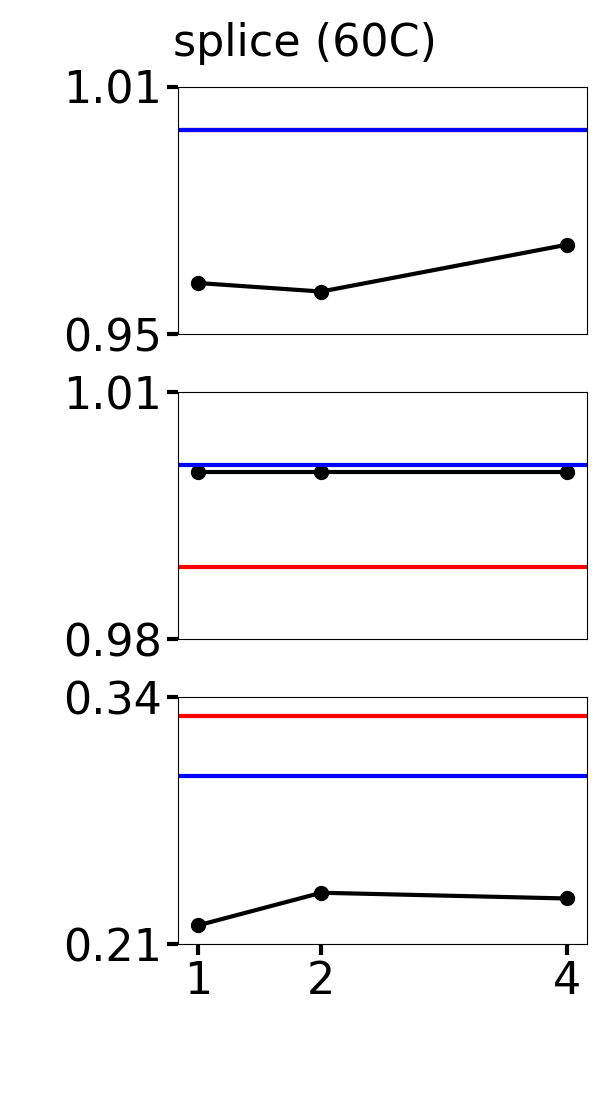}
\end{subfigure}
\begin{subfigure}[b]{.16\linewidth}
\includegraphics[width=\linewidth]{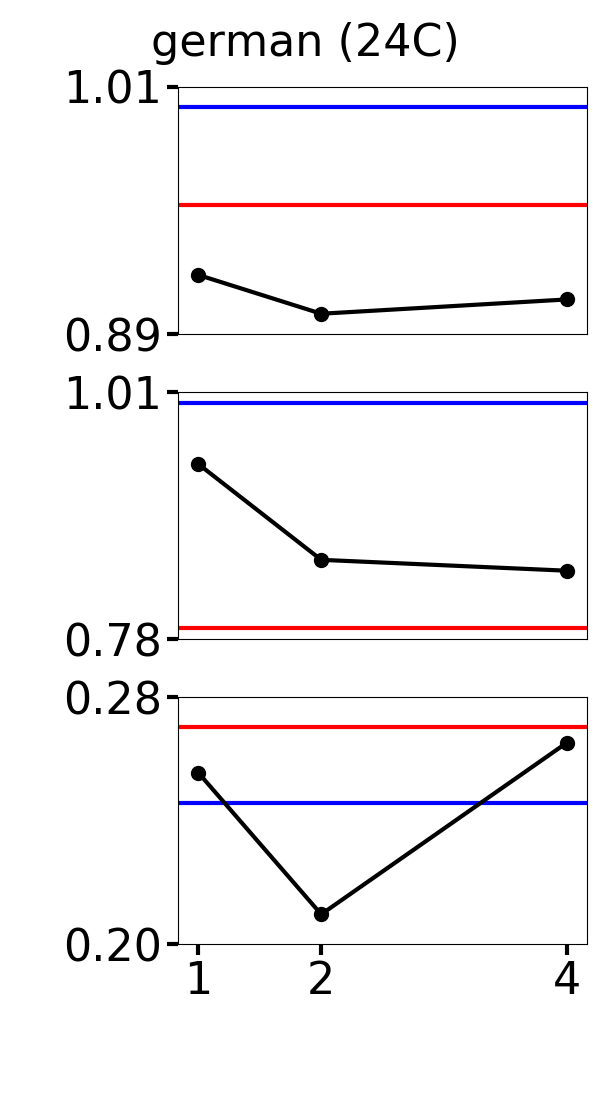}
\end{subfigure}
\begin{subfigure}[b]{.16\linewidth}
\includegraphics[width=\linewidth]{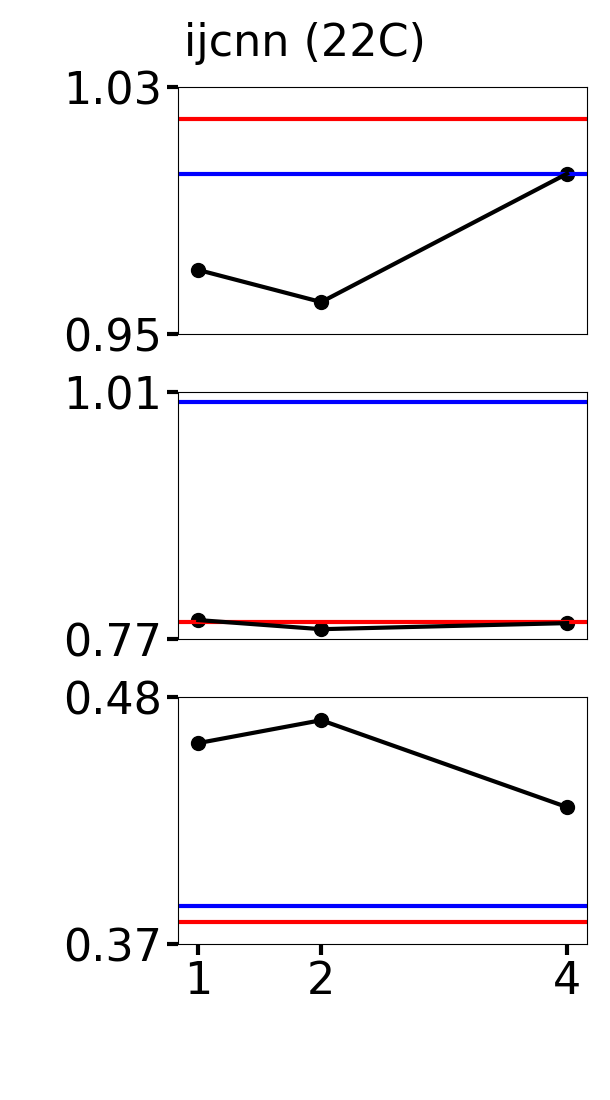}
\end{subfigure}
\begin{subfigure}[b]{.16\linewidth}
\includegraphics[width=\linewidth]{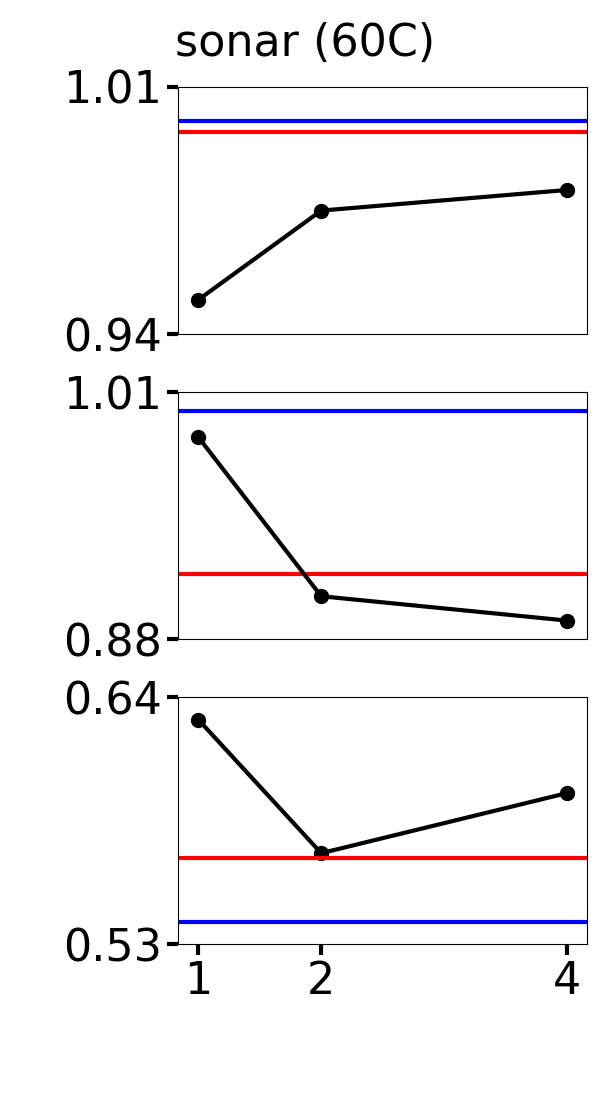}
\end{subfigure}
\label{fig:animals}\vspace*{0pt}\\
\begin{subfigure}[b]{.16\linewidth}
\includegraphics[width=\linewidth]{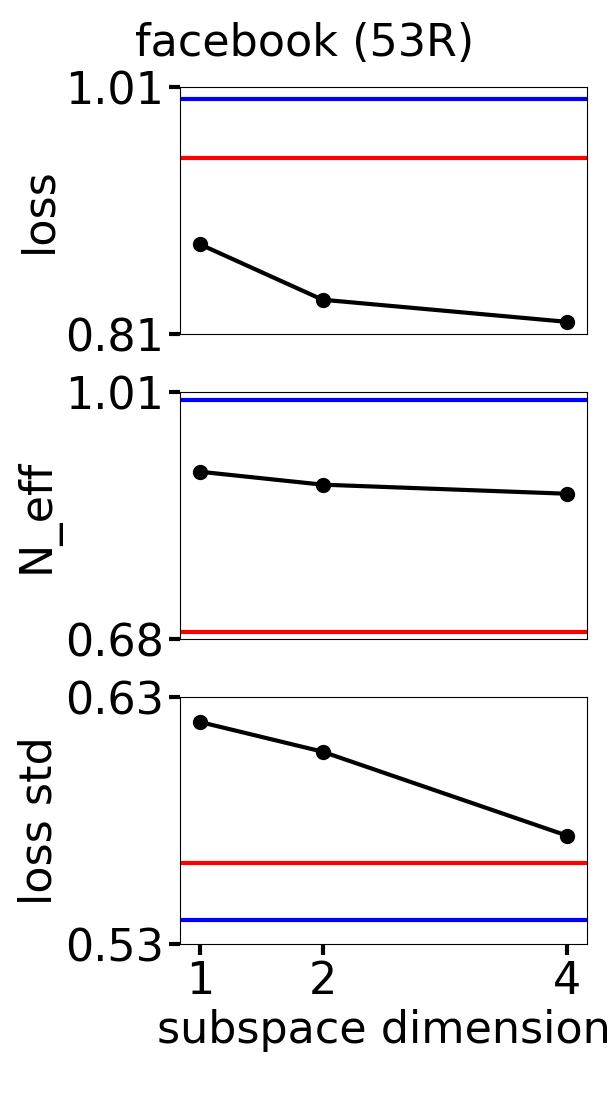}
\end{subfigure}
\begin{subfigure}[b]{.16\linewidth}
\includegraphics[width=\linewidth]{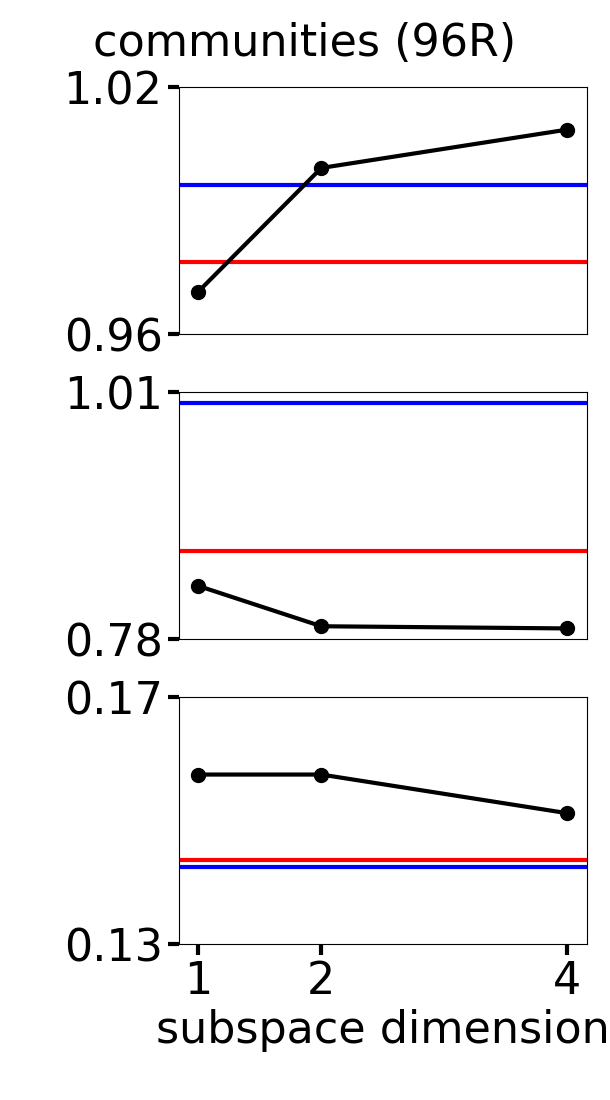}
\end{subfigure}
\begin{subfigure}[b]{.16\linewidth}
\includegraphics[width=\linewidth]{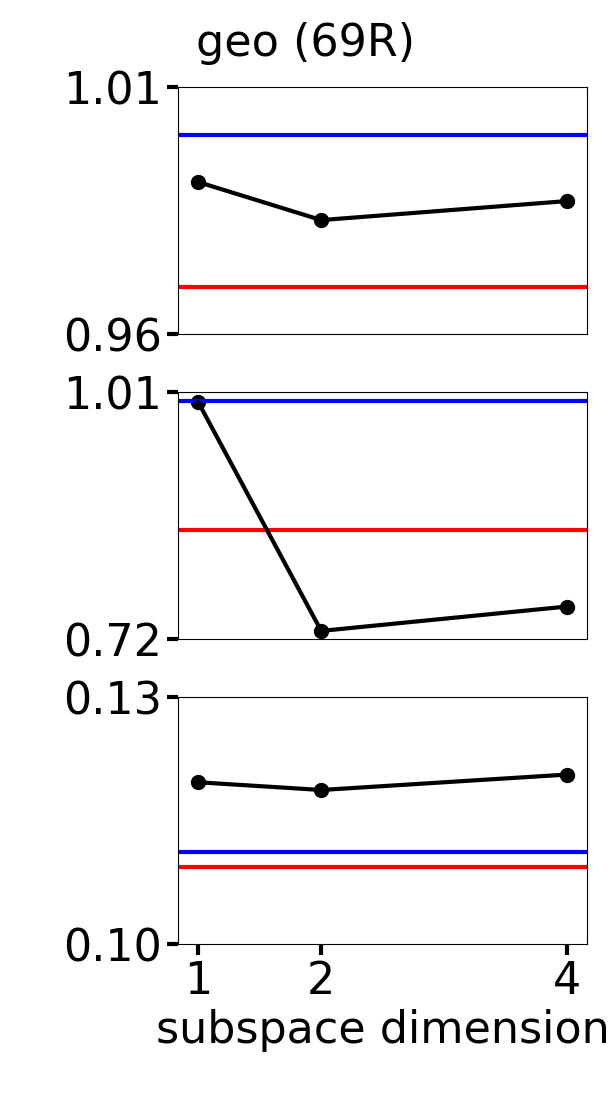}
\end{subfigure}
\begin{subfigure}[b]{.16\linewidth}
\includegraphics[width=\linewidth]{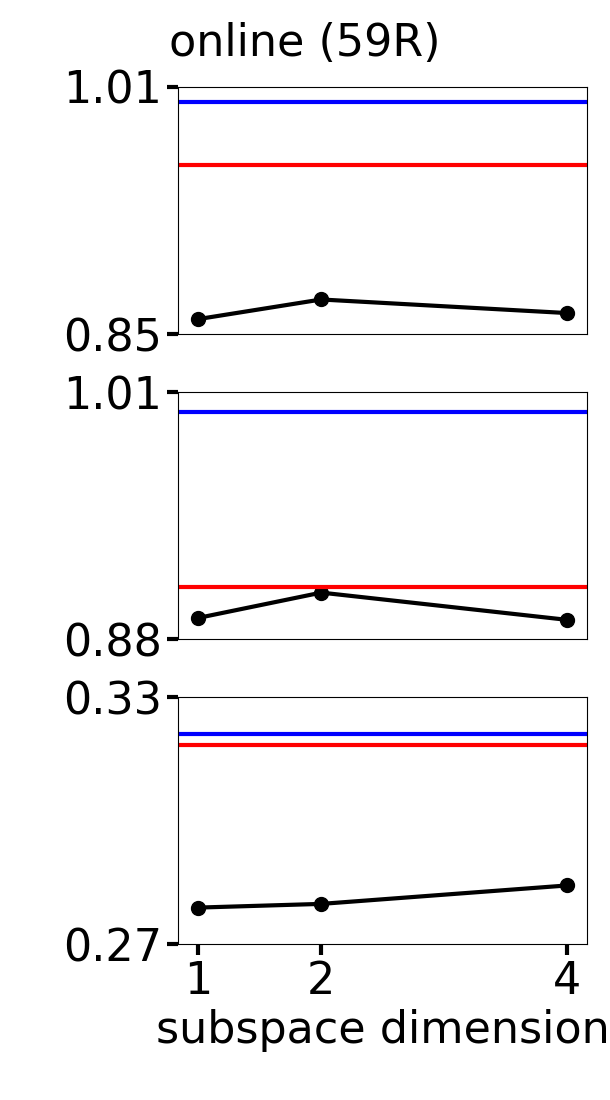}
\end{subfigure}
\begin{subfigure}[b]{.16\linewidth}
\includegraphics[width=\linewidth]{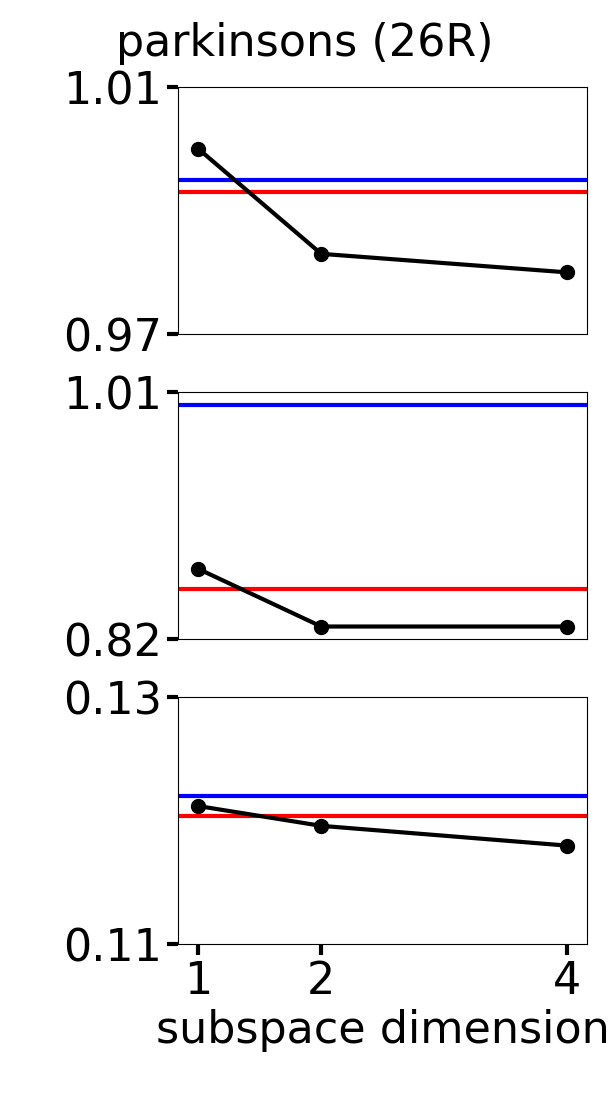}
\end{subfigure}
\begin{subfigure}[b]{.16\linewidth}
\includegraphics[width=\linewidth]{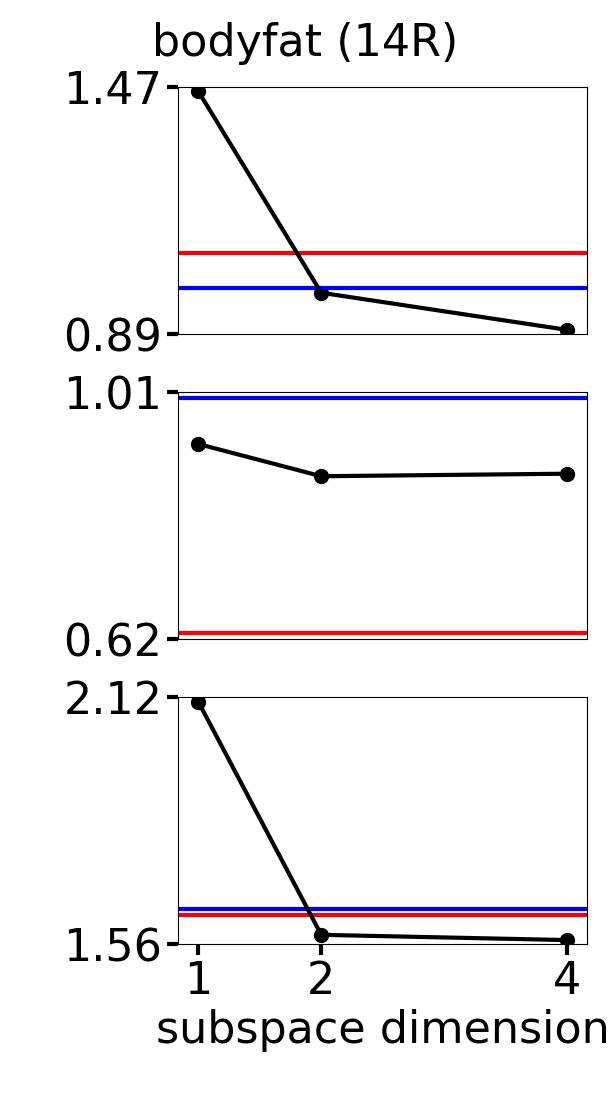}
\end{subfigure}

\caption{For several datasets, the mean loss (``loss''), downstream effective sample size (``N\_eff''), and loss standard deviation (``loss std'') of our method is shown in black as the subspace dimension varies.  The respective values of the unweighted and importance weighting baseline are shown in blue and red, respectively.  Dataset dimension in parentheses.}
\label{fig:real_data_loss}
\end{figure*}

\vspace{-0pt}
\section{Experiments with Real Data}

\vspace{-0pt}
\subsection{Covariate Shift Experiments}
\subsubsection{Data Preprocessing}
We took several datasets from \cite{Lichman:2013,libsvm}, and for each, introduced covariate shift by creating a sampling scheme that can repeatedly generate training and test data samples.  For each dataset, we first identify a single predictive vector in covariate space by generating 100 random vectors, and measuring the predictive utility of a vector by projecting the covariates onto it, running kernel density regression, and examining the in-sample squared error.  We first sample the dataset to form the training data.  Then to generate the test data, we subsample from the remaining dataset according to the projections along the vector.  Let $t_1,t_0$ denote the max and min projected covariate value in the dataset, and $\sigma$ be the standard deviation of the projected values.  We select data with probability proportional to the density of a $\mathcal{N}(\alpha(t_1-t_o),c\sigma^2)$ distribution based on their projected value.  Values of $\alpha$ close to 0 or 1, and small values of $c$ will lead to a small effective sample size; we choose them so this is achieved.  Given that vector, multiple training and test datasets can then be sampled.  

\vspace{-0pt}
\subsubsection{Experimental Results}

For each dataset, we generated 50 training and test data pairs.  For each pair, we set aside $\tfrac{1}{3}$ of the test data to evaluate out-of-sample performance on the test distribution, and train the model with the remaining data.  Experimental results are in Figure \ref{fig:real_data_loss}, where for each dataset, in 3 subplots we plot in black the respective quantities for our method as the dimension of the projection changes: ``loss'' - the average loss over the 50 replicates, ``N\_eff'' - the average effective sample size enjoyed by the downstream IW estimator, ``loss std'' - the standard deviation in loss over the replicates.  For comparison, we also indicate with horizontal lines the values of those 3 quantities for the unweighted (UW) baseline in blue, and those for the naive importance weighting (IW) baseline in red.  We normalize the results so that UW has a average loss and average effective sample size of 1.  Thus for the loss and N\_eff subplots, there is always a horizontal blue line at a height of 1.  In parentheses are dataset dimension and whether it is classification (C) or regression (R).  We use absolute prediction error loss for regression problems as before, and 0-1 loss for classification problems.

We see that extreme dimension reduction does not always help; whether it does depends on the dataset and the dimension of the subspace.  However, for many datasets, there is some subspace dimension for which our dimension reduction procedure does help.  For a given dataset, the loss can both increase (i.e. communities dataset) or decrease (i.e. facebook) with a reduction in subspace dimension, depending whether the reduction in the variance of the estimator $\hat{L}(b;A,w^A)$, used both in selecting $A$ and the downstream IW, is enough to offset a potential increase in estimator bias.  What dimension reduction offers to the covariate shift problem is a way to navigate that tradeoff.  We also note that as alluded to in Section \ref{sec:analysis} and the simulation study, the variance of our two-step procedure can be higher than  IW.

We also caution that the 3 quantities we report for our method are not directly comparable between different subspace dimensions, because the value of hyperparameter $\lambda$ chosen by cross-validation also differs between dimensions.  This means we cannot make claims like ``the loss standard deviation is lowered when reducing the subspace dimension solely due to decreased density ratio estimation variance''.


Despite this caveat, we can still identify a phenomena that is paradoxical at first: when going from 2 dimensions to 1, the utilized effective sample size, N\_eff, tends to increase.  Coupled with the accompanying decrease in density ratio estimation variance, we would expect the loss standard deviation to decrease under this change.  However, we observe exactly the opposite. This phenomena is likely due to the increased number of bad local optima in the non-convex optimization problem when the subspace dimension is extremely low, i.e. 1.  We anecdotally observed the same phenomena in the simulation study, where loss standard deviation was sensitive to how many random restarts we used for optimization.  

\subsection{Case Study on Learning Subgroup Models}
Recent work has shown that some real world models\cite{pmlr-v81-buolamwini18a} have lower prediction performance on minority subgroups than for the majority group.  In theory, this problem could be alleviated by learning a subgroup model by only using data from the subgroup.  However, as data are scarce, the performance of the subgroup model could be improved by utilizing the appropriate data from the majority group.  In correcting for covariate shift, a model is learned to minimize average prediction error with respect to one population given labelled samples drawn from another population.  This motivates us to recast the subgroup model learning problem as a covariate shift correction problem. We will describe this formulation, then use our method to learn a classification model for depression for the age 18-24 subgroup. 

Formally, given $(x_i,y_i,z_i)\sim P_{X,Y,Z}$, where feature vectors $x_i\in \mathbb{R}^D$ and $z_i$ is an indicator denoting membership in the subgroup of interest, given loss function $l$ and model class $\mathcal{F}$, the subgroup model learning problem seeks $\operatorname{argmin}_{f\in\mathcal{F}} E_{P_{X,Y|Z=1}}[l(f(X),Y)]$.  We reformulate it as a covariate shift problem where $P^{te}_{X,Y}=P_{X,Y|Z=1}$ and $P^{tr}_{X,Y}=P_{X,Y}$.  The covariate shift assumption is then that $P_{Y|X,Z=1}=P_{Y|X}$ (which might not actually hold).

\begin{figure}[]
\centering
\begin{subfigure}{0.48\linewidth}
\includegraphics[width=1\linewidth]{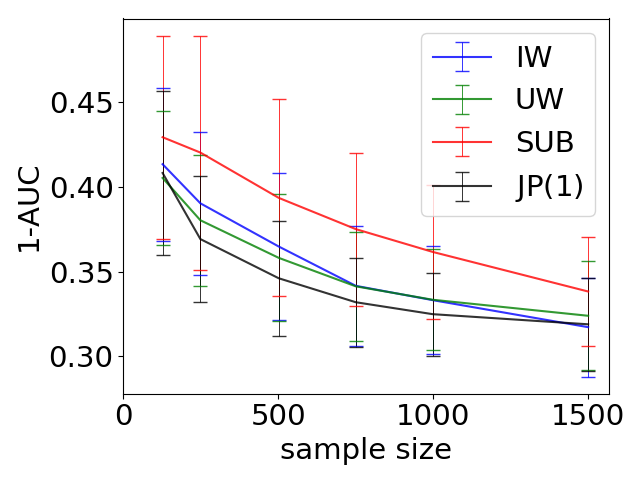}
\caption{Our method has better relative performance when labelled data is scarce.}
\label{fig:nhanes_roc_trend}
\end{subfigure}
\begin{subfigure}{0.48\linewidth}
\includegraphics[width=1\linewidth]{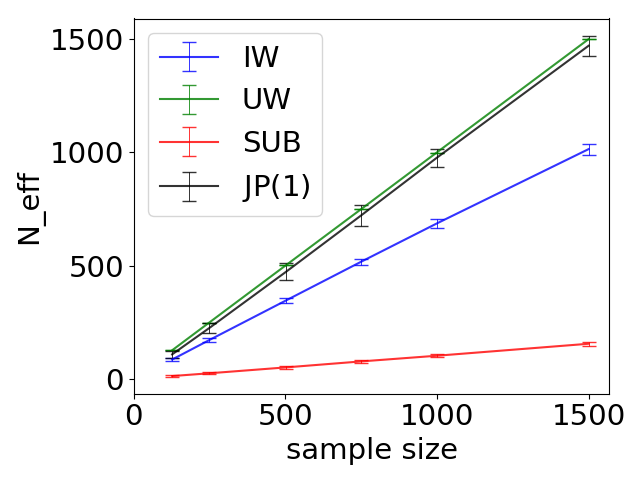}
\caption{Our method is able to use more data to learn the subgroup model than IW \& SUB.}
\label{fig:nhanes_Neff_trend}
\end{subfigure}
\caption{Our method performs relative better when labelled data is scarce due to its ability to utilize more data for fitting.}
\label{fig:nhanes_trends}
\end{figure}

We apply our covariate shift correction method to the National Health and Nutrition Examination Survey (NHANES) \cite{johnson2013national} dataset, downloaded via the eponymous R package \cite{NHANES}.  This dataset contains demographic, physical, health, and lifestyle variables from 10000 individuals in the United States, obtained via surveys between 1999 and 2004.  Importantly, these individuals are sampled to be representative of the United States general population as a whole, so that subgroups underrepresented in the population will also be so in the dataset.  Given the recent rise in mental health issues of college students \cite{depression}, we use our method to learn a classifier to predict the presence of depression for the subgroup of persons of college age, i.e. those between 18 and 24 years of age.  Such persons, according to the NHANES dataset, are on average different from the general population.  For example, they tend to be less likely to smoke, and have sleep troubles, more likely to be single and physically active, and on average have lower weight and poverty levels.  These differences open up the possibility that the best (linear) model for this subgroup may be different than that of the general population.  Furthermore, only about 10\% of the general population (and thus NHANES dataset) falls into this subgroup, so that approaches beyond simply fitting a model to data from the subgroup are needed to handle data scarcity.  

Regarding features, we only use those that can be easily obtained, such as those aforementioned as well as income, pulse, years of education, excluding those require medical tests such as urine sugar level, and those too closely correlated with depression, i.e. how often the individual considers their own general mental health to be bad.  The utility of the classifier would be to identify those at risk of depression, using easily obtainable features that are not proxies for depression.  Following this guideline, we end up using 27 features, and after removing those for which not all features were available, retain data from 4517 individuals, of which 21\% were labelled as being depressed (proportion of days feeling depressed not equal to ``None'').

We compare the performance of our method (projecting to 1-dimension, denoted JP(1)) to the naive importance weighting and unweighted methods (denoted IW and UW in figures) in learning a classifier for depression for the age 18 to 24 subgroup using the covariate shift reformulation, as well as the baseline of fitting the classifier using only data from the subgroup (denoted SUB).  We perform this comparison as the number of labelled data varies - for various $N$, we uniformly subsample a subset of size $N$, train a model using a given method, and obtain predictions for the remaining unsampled data.  We do this 100 times for each $N$ and method, and report the average 1-AUC for them in Figure \ref{fig:nhanes_roc_trend} as well as the average effective sample size (N\_eff) in fitting the models in Figure \ref{fig:nhanes_Neff_trend}.

We see that our method, for all dataset sizes, it is able to consider only (1-dimensional) subspaces in which the effective sample size of the loss estimator is close to that of the entire labelled data without hurting performance (recall the tradeoff controlling the lower bound on effective sample size is chosen via cross validation).  When the labelled data is scarce, the resulting low variance of the loss estimator offsets the bias that results from only estimating density ratios within a subspace, when compared to IW, whose effective sample size is always smaller.  When data is plentiful, i.e. at $N=1500$, the relative advantage of our method in utilizing more data is diminished compared to the baselines, who utilize less biased estimators.  A similar relationship holds between IW and the more biased UW, though the dataset size at which their relative performances change order occurs at $N=1000$.  Finally, SUB, whose estimator is unbiased but enjoys far lower effective sample sizes, has far worse performance than all of the covariate shift approaches, which utilize labelled data from outside the subgroup.

\section{Related Work}

Our work studies the benefit of dimension reduction prior to applying importance weighting approaches \cite{gretton2009covariate,sugiyama2008direct, bickel2009discriminative} for solving the covariate shift problem \cite{shimodaira2000improving}.  In particular, we use dimension reduction as a way to potentially reduce the otherwise high variance in importance weighting approaches pointed out by \cite{cortes2010learning}.  Past work addressing this issue \cite{shimodaira2000improving,cortes2010learning,yamada2011relative,hu2016robust} has used various forms of regularization to discourage exceedingly large weights, at the cost of increased bias.  Another line of work eschews weight estimation and builds predictors that are robust to potential shifts in the covariate \cite{wen2014robust} or conditional outcome \cite{liu2014robust,chen2016robust} distribution.  A separate work \cite{reddi2015doubly} reduces variance by using the predictor minimizing training loss as a prior when minimizing the reweighted loss.  One work \cite{sugiyama2010direct} does apply dimension reduction before estimating density ratios, by searching for a subspace in which the training and test densities are maximally different, and estimating the ratios within it.  However, their chosen subspace, by construction, would result in a small underlying effective sample size relative to other subspaces.

Our approach is similar to supervised representation learning methods that find representations that are both predictive and similar in some sense between the two domains; penalizing subspaces with small effective sample sizes is equivalent to penalizing subspaces in which the Pearson divergence is high.  However, none of those approaches perform importance weighting when evaluating the predictive utility of a representation, which is advantageous in many situations when the downstream model (linear model in our case) is misspecified so that importance weighting is actually necessary to begin with.  For example, \cite{chen2009extracting} finds a linear subspace minimizing an additive combination of maximum mean discrepancy, and the lowest possible \emph{unweighted} squared loss of a linear model acting on it.  Recent work has used neural networks to find representations with which low unweighted training domain loss can be achieved, and which are similar between domains, as accomplished through using closed form discrepancy measures \cite{johansson2016learning} or adversarial training \cite{ganin2016domain}.  However, the learned model is not linear and interpretable.

\section{Conclusion}
To address the high variance of importance weighting (IW) approaches to handling covariate shift, we have explored the benefit of extreme dimension reduction, i.e. to very low dimensions, prior to applying such approaches.  We presented and solved a bilevel optimization problem that searches for a subspace that is predictive, and for which the downstream IW procedure would enjoy a large effective sample size.  We illustrated through lemmas and simulated and real data that dimension reduction helps sometimes but not always, due to bias introduced into the IW loss estimator, and the fact that we incur additional variance in the selection of the subspace.  Study into how to alleviate these issues, as well as how to avoid bad local optima in our optimization problem, are potential future work.  We also formulated the problem of learning a subgroup-specific model as a covariate shift problem, and demonstrate the advantage of dimension reduction in learning a classifier for depression for the college-aged subgroup.  Finally, we believe our work has implications for causal inference, which can be formulated as a covariate shift problem \cite{johansson2016learning}, and where the propensity score, essentially a density ratio, is used widely \cite{austin2011introduction}, yet unreliably estimated in higher dimensions \cite{kang2007demystifying}.


\bibliographystyle{plain}
\bibliography{kernel_notes}

\begin{thebibliography}{10}

\bibitem{libsvm}
Libsvm datasets.
\newblock \url{http://www.csie.ntu.edu.tw/~cjlin/libsvmtools/datasets/}.

\bibitem{amos2017optnet}
Brandon Amos and J~Zico Kolter.
\newblock Optnet: Differentiable optimization as a layer in neural networks.
\newblock {\em arXiv preprint arXiv:1703.00443}, 2017.

\bibitem{depression}
American~Psychological Association.
\newblock {\em The Crisis on Campus}, 2011.

\bibitem{austin2011introduction}
Peter~C Austin.
\newblock An introduction to propensity score methods for reducing the effects
  of confounding in observational studies.
\newblock {\em Multivariate behavioral research}, 46(3):399--424, 2011.

\bibitem{bickel2009discriminative}
Steffen Bickel, Michael Br{\"u}ckner, and Tobias Scheffer.
\newblock Discriminative learning under covariate shift.
\newblock {\em Journal of Machine Learning Research}, 10(Sep):2137--2155, 2009.

\bibitem{pmlr-v81-buolamwini18a}
Joy Buolamwini and Timnit Gebru.
\newblock Gender shades: Intersectional accuracy disparities in commercial
  gender classification.
\newblock In Sorelle~A. Friedler and Christo Wilson, editors, {\em Proceedings
  of the 1st Conference on Fairness, Accountability and Transparency},
  volume~81 of {\em Proceedings of Machine Learning Research}, pages 77--91,
  New York, NY, USA, 23--24 Feb 2018. PMLR.

\bibitem{chen2009extracting}
Bo~Chen, Wai Lam, Ivor Tsang, and Tak-Lam Wong.
\newblock Extracting discriminative concepts for domain adaptation in text
  mining.
\newblock In {\em Proceedings of the 15th ACM SIGKDD international conference
  on Knowledge discovery and data mining}, pages 179--188. ACM, 2009.

\bibitem{chen2016robust}
Xiangli Chen, Mathew Monfort, Anqi Liu, and Brian~D Ziebart.
\newblock Robust covariate shift regression.
\newblock In {\em Artificial Intelligence and Statistics}, pages 1270--1279,
  2016.

\bibitem{cortes2010learning}
Corinna Cortes, Yishay Mansour, and Mehryar Mohri.
\newblock Learning bounds for importance weighting.
\newblock In {\em Advances in neural information processing systems}, pages
  442--450, 2010.

\bibitem{doucet2001sequential}
Arnaud Doucet, Nando De~Freitas, and NJ~Gordon.
\newblock Sequential monte carlo methods in practice. series statistics for
  engineering and information science, 2001.

\bibitem{foo2008efficient}
Chuan-sheng Foo, Chuong~B Do, and Andrew~Y Ng.
\newblock Efficient multiple hyperparameter learning for log-linear models.
\newblock In {\em Advances in neural information processing systems}, pages
  377--384, 2008.

\bibitem{ganin2016domain}
Yaroslav Ganin, Evgeniya Ustinova, Hana Ajakan, Pascal Germain, Hugo
  Larochelle, Fran{\c{c}}ois Laviolette, Mario Marchand, and Victor Lempitsky.
\newblock Domain-adversarial training of neural networks.
\newblock {\em Journal of Machine Learning Research}, 17(59):1--35, 2016.

\bibitem{gretton2009covariate}
Arthur Gretton, Alex Smola, Jiayuan Huang, Marcel Schmittfull, Karsten
  Borgwardt, and Bernhard Sch{\"o}lkopf.
\newblock Covariate shift by kernel mean matching.
\newblock 2009.

\bibitem{hu2016robust}
Weihua Hu, Issei Sato, and Masashi Sugiyama.
\newblock Robust supervised learning under uncertainty in dataset shift.
\newblock {\em arXiv preprint arXiv:1611.02041}, 2016.

\bibitem{johansson2016learning}
Fredrik Johansson, Uri Shalit, and David Sontag.
\newblock Learning representations for counterfactual inference.
\newblock In {\em International Conference on Machine Learning}, pages
  3020--3029, 2016.

\bibitem{johnson2013national}
Clifford~L Johnson, Ryne Paulose-Ram, Cynthia~L Ogden, Margaret~D Carroll,
  Deanna Kruszan-Moran, Sylvia~M Dohrmann, and Lester~R Curtin.
\newblock National health and nutrition examination survey. analytic
  guidelines, 1999-2010.
\newblock 2013.

\bibitem{kanamori2009least}
Takafumi Kanamori, Shohei Hido, and Masashi Sugiyama.
\newblock A least-squares approach to direct importance estimation.
\newblock {\em Journal of Machine Learning Research}, 10(Jul):1391--1445, 2009.

\bibitem{kang2007demystifying}
Joseph~DY Kang and Joseph~L Schafer.
\newblock Demystifying double robustness: A comparison of alternative
  strategies for estimating a population mean from incomplete data.
\newblock {\em Statistical science}, pages 523--539, 2007.

\bibitem{li1991sliced}
Ker-Chau Li.
\newblock Sliced inverse regression for dimension reduction.
\newblock {\em Journal of the American Statistical Association},
  86(414):316--327, 1991.

\bibitem{Lichman:2013}
M.~Lichman.
\newblock {UCI} machine learning repository, 2013.

\bibitem{liu2014robust}
Anqi Liu and Brian Ziebart.
\newblock Robust classification under sample selection bias.
\newblock In {\em Advances in neural information processing systems}, pages
  37--45, 2014.

\bibitem{maclaurin2015gradient}
Dougal Maclaurin, David Duvenaud, and Ryan Adams.
\newblock Gradient-based hyperparameter optimization through reversible
  learning.
\newblock In {\em International Conference on Machine Learning}, pages
  2113--2122, 2015.

\bibitem{maclaurin2015autograd}
Dougal Maclaurin, David Duvenaud, and Ryan~P Adams.
\newblock Autograd: Reverse-mode differentiation of native python.
\newblock In {\em ICML workshop on Automatic Machine Learning}, 2015.

\bibitem{NHANES}
Randall Pruim.
\newblock {\em NHANES: Data from the US National Health and Nutrition
  Examination Study}, 2015.
\newblock R package version 2.1.0.

\bibitem{reddi2015doubly}
Sashank~Jakkam Reddi, Barnabas Poczos, and Alexander~J Smola.
\newblock Doubly robust covariate shift correction.
\newblock 2015.

\bibitem{shimodaira2000improving}
Hidetoshi Shimodaira.
\newblock Improving predictive inference under covariate shift by weighting the
  log-likelihood function.
\newblock {\em Journal of statistical planning and inference}, 90(2):227--244,
  2000.

\bibitem{sugiyama2010direct}
Masashi Sugiyama, Satoshi Hara, Paul Von~B{\"u}nau, Taiji Suzuki, Takafumi
  Kanamori, and Motoaki Kawanabe.
\newblock Direct density ratio estimation with dimensionality reduction.
\newblock In {\em Proceedings of the 2010 SIAM International Conference on Data
  Mining}, pages 595--606. SIAM, 2010.

\bibitem{sugiyama2007covariate}
Masashi Sugiyama, Matthias Krauledat, and Klaus-Robert Muller.
\newblock Covariate shift adaptation by importance weighted cross validation.
\newblock {\em Journal of Machine Learning Research}, 8(May):985--1005, 2007.

\bibitem{sugiyama2008direct}
Masashi Sugiyama, Shinichi Nakajima, Hisashi Kashima, Paul~V Buenau, and
  Motoaki Kawanabe.
\newblock Direct importance estimation with model selection and its application
  to covariate shift adaptation.
\newblock In {\em Advances in neural information processing systems}, pages
  1433--1440, 2008.

\bibitem{JMLR:v17:16-177}
James Townsend, Niklas Koep, and Sebastian Weichwald.
\newblock Pymanopt: A python toolbox for optimization on manifolds using
  automatic differentiation.
\newblock {\em Journal of Machine Learning Research}, 17(137):1--5, 2016.

\bibitem{wen2014robust}
Junfeng Wen, Chun-nam Yu, and Russell Greiner.
\newblock Robust learning under uncertain test distributions: Relating
  covariate shift to model misspecification.
\newblock In {\em Proceedings of the 31st International Conference on Machine
  Learning (ICML-14)}, pages 631--639, 2014.

\bibitem{yamada2011relative}
Makoto Yamada, Taiji Suzuki, Takafumi Kanamori, Hirotaka Hachiya, and Masashi
  Sugiyama.
\newblock Relative density-ratio estimation for robust distribution comparison.
\newblock In {\em Advances in neural information processing systems}, pages
  594--602, 2011.

\end{thebibliography}

\includepdf[pages=-]{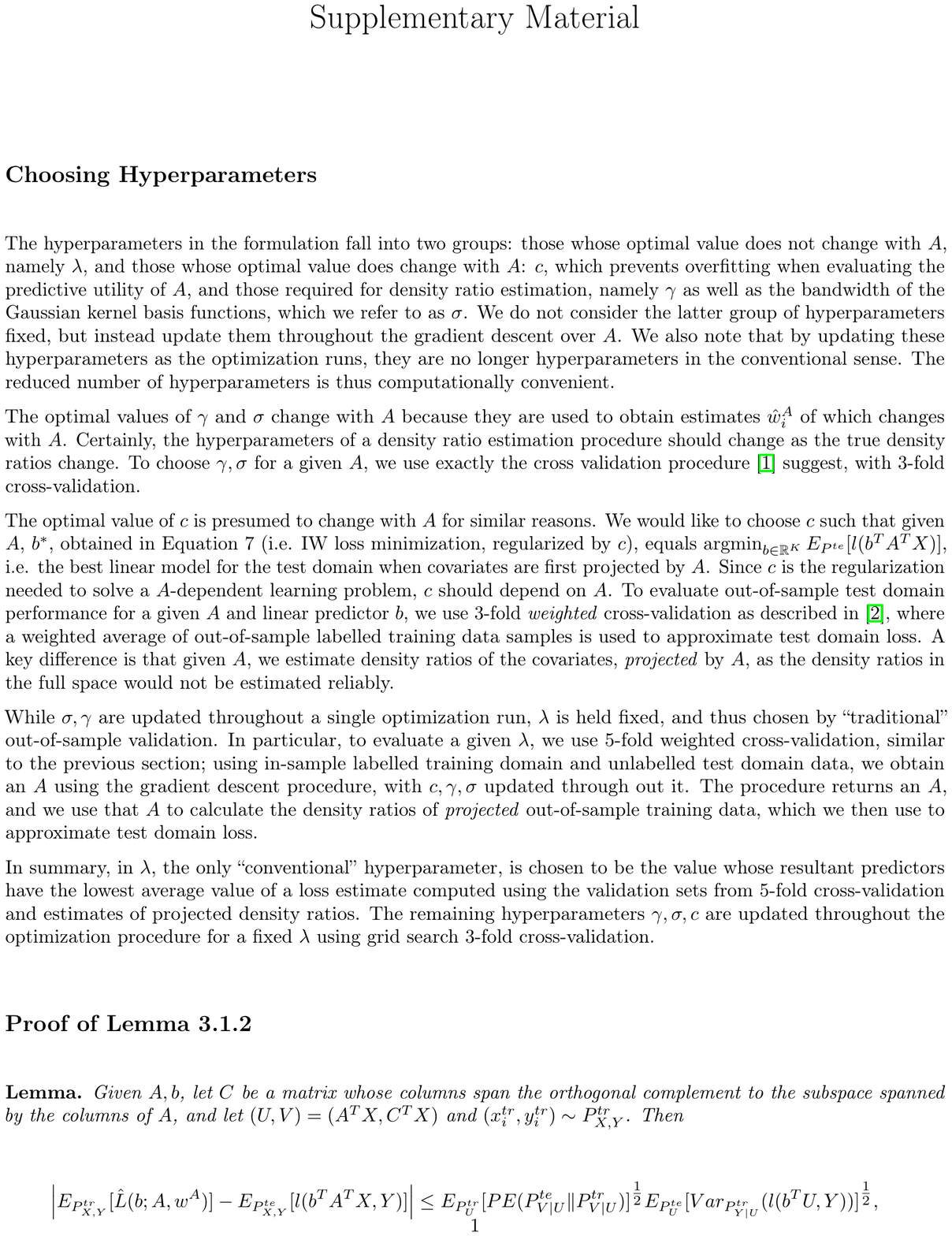}

\end{document}